\newtheorem{theorem}{Theorem} 
\newtheorem{lemma}{Lemma} 
\newtheorem{algorithm}{Algorithm}
\newcommand{\q}[1]{
  \ifthenelse{\isempty{#1}}{
    {Q}
  }{
    {Q({#1})}
  }
}
\newcommand{\qa}[1]{
  \ifthenelse{\isempty{#1}}{
    \tilde{Q}
  }{
    \tilde{Q}(#1)
  }
}
\newcommand{\p}[1]{
  \ifthenelse{\isempty{#1}}{
    P
  }{
    P(#1)
  }
}
\newcommand{\dist}[2]{ d({#1},{#2}) } 
\newcommand{\s}{{\sigma}}
\newcommand{\lam}{{\lambda}}
\newcommand{\D}[1]{D({#1})}
\newcommand{\Dt}[1]{D'({#1})}
\newcommand{\greedy}{{\sc GreedyTrack}}
\newcommand{\discGreedy}{{\sc ModifiedGreedy}}
\newcommand{\feasability}{Gap Invariant}
\newcommand{\spFeasability}{SP-Gap Invariant}
\long\def\cut#1{{}}
\title{Trackability with Imprecise Localization\thanks{%
    This research was supported in part by the National Science
    Foundation grants IIS-0904501, CNS-1035917, and the National
    Science Foundation Graduate Research Fellowship under Grant
    No. DGE-1144085.}}  \author{Kyle Klein} \author{Subhash Suri}
\affil{Department of Computer Science\\
  University of California Santa Barbara\\
  Santa Barbara, CA, USA 93106\\
	\{kyleklein, suri\}@cs.ucsb.edu}
\date{}
\begin{document}

\maketitle

\begin{abstract}

  Imagine a tracking agent $\p{}$ who wants to follow a moving target $\q{}$ 
  in $d$-dimensional Euclidean space. The tracker has access to a noisy
  location sensor that reports an estimate $\qa{t}$ of the target's
  true location $\q{t}$ at time $t$, where $||\q{t} - \qa{t}||$
  represents the sensor's localization error. We study the limits of
  tracking performance under this kind of sensing imprecision. In
  particular, we investigate (1) what is $\p{}$'s best strategy to
  follow $\q{}$ if both $\p{}$ and $\q{}$ can move with equal speed,
  (2) at what rate does the distance $||\q{t} - \p{t} ||$ grow under
  worst-case localization noise, (3) if $\p{}$ wants to keep $\q{}$
  within a prescribed distance $L$, how much faster does it need to
  move, and (4) what is the effect of obstacles on the tracking
  performance, etc.  Under a relative error model of noise, we are
  able to give upper and lower bounds for the worst-case tracking
  performance, both with or without obstacles.

\end{abstract}

\section{Introduction}

The problem of tracking a single known target is a classical one with a long 
history in artificial intelligence, robotics, computational geometry, graph 
theory and control systems. The underlying motivation is that many robotic
applications including search-and-rescue, surveillance, reconnaissance
and environmental monitoring have components that are best modeled as
a tracking problem. The problem is often formulated as a
\emph{pursuit-evasion} game, with colorful names such as
Man-and-the-Lion, Cops-and-Robbers, Hunter-and-Rabbit, Homicidal
Chauffeur, and
Princess-and-Monster~\cite{aigner,alpern-princess,bopardikar-chauffeur,isler-cop-robber}. Visibility-based
pursuit evasion~\cite{Guibas99,suzuki-visibility}, in particular, has
been a topic of great interest, in part due to its simple but realistic model: 
a team of pursuers is tasked with locating a single
adversarial evader in an geometric environment with polygonal
obstacles where pursuers learn the evader's position only when the
latter is in their line-of-sight. After two decades of research, tight
bounds are known for detection or capture of the evader for many basic
formulations of the problem~\cite{Bhadauria2012,Guibas99,KleinSocg13},
although the topic remains a rich subject of ongoing
research~\cite{kleinIsaac13,Noori2013}.

Most theoretical analyses of tracking, however, assume an idealized
sensing model, ignoring the fact that all location sensing is noisy
and imprecise in practice: the target's position is rarely known with
complete and error-free precision. Although some papers have explored
models to incorporate practical limitations of idealized visibility
including angular visibility~\cite{karnad08}, beam
sensing~\cite{park}, field-of-view sensors~\cite{gerkey-fov}, and
range-bounded visibility~\cite{Bopardikar2008}, the topic of sensing
noise or imprecision has largely been handled heuristically or through
probabilistic techniques such as Kalman filters~\cite{kalman1,Liao03,Sheng2005,thrun1}.  
One exception is~\cite{Rote2003}, where Rote investigates a tracking problem 
under the \emph{absolute error} model: in this model, the target's position 
is always known to lie within distance $1$ of its true location, regardless 
of its distance from the tracker. The analysis in~\cite{Rote2003} shows that, 
under this noise model, the distance between the tracker and the target can 
grow at the rate of $\Theta(t^{1/3})$, where $t$ is the time parameter.  Our
model, by comparison, deals with a more severe form of noise, with imprecision 
proportional to the distance from the tracker. 
In~\cite{Kuntsevich2002}, Kuntsevich et al. consider the same relative 
error model as ours, but without any obstacles. Their work has a 
control-theoretic perspective, with a primary goal of deriving a bound on 
the time needed by the tracker to capture the target.
Our main contribution is to analyze the worst-case behavior of 
trackability as a function of the localization precision parameter $\lam$.

\paragraph{Motivation and the Problem Statement.}

This paper takes a small step towards bridging the gap between theory
and practice of trackability, and analyzes the effect of noisy sensing. 
In particular, we consider a tracking agent $\p{}$ who wants to follow a 
moving target $\q{}$ in $d$-dimensional Euclidean space using a
noisy location sensor. 
For simplicity, we analyze the problem in two dimensions, but the results
easily extend to $d$ dimensions, as discussed in Section~\ref{sec:dimension}. 
We use the notation $\q{t}$ and $\p{t}$
to denote the (true) positions of the target and the tracker at time
$t$. We adopt a simple but realistic model of \emph{relative} error in
sensing noise: the localization error is \emph{proportional} to the
true distance between the tracker and the target.  More precisely, the
localization error is upper bounded as $||\q{t} - \qa{t}|| \:\leq\:
\frac{1}{\lam} || \p{t} - \q{t} ||$ at all times $t$, where $\lam \geq
1$ is the quality measure of \emph{localization precision}. Thus, the
closer the target, smaller the error, and a larger $\lam$ means better
localization accuracy, while $\lam = 1$ represents the completely
noisy case when the target can be anywhere within a disk of radius
$||\p{t} - \q{t} ||$ around $\q{t}$.
\emph{It is important to note that the parameter $\lam$ is used only for the 
analysis, and is not part of information revealed to the tracker.} In other words, 
the tracker only observes the approximate location $\qa{t}$, and \emph{not the
uncertainty disk} containing the target.
The relative error model is intuitively simple (farther the object, larger the 
measurement error) and captures the realism of many sensors: for instance, the 
resolution error in camera-based tracking systems is proportional to the target's 
distance, and in network-based tracking, \emph{latency} causes a proportionate 
localization uncertainty because of target's movement before the signal is received 
by the tracker.

We study the tracking problem as a game between two players, the tracker $\p{}$ and 
the target $\q{}$, which is played in \emph{continuous time and space}: that is, 
each player is able to instantaneously observe and react to other's position, and
the environment is the two-dimensional plane, with or without polygonal obstacles. 
Both the target and the tracker can move with equal speed, which we normalize to 
\emph{one}, without loss of generality. With the unit-speed assumption,
the following holds, for all times $t_1 \leq t_2$:

\begin{equation*}
||\q{t_2} - \q{t_1}|| \leq |t_2 - t_1|, \ \ \ ||\p{t_2} - \p{t_1}|| \leq |t_2 - t_1|
\end{equation*}

Under the \emph{relative localization error} model, the reported location of the target
$\qa{t}$ always satisfies the following bound, where $\lam$ is the accuracy parameter:

\begin{equation*}
  ||\q{t} - \qa{t} || \:\leq\: \frac{||\p{t} - \q{t}||}{\lam}
\end{equation*}

We measure the tracking performance by analyzing the distance function between 
the target and the tracker, namely, $\D{t} = \dist{\p{t}}{\q{t}}$, over time,
with $\D{0}$ being the distance at the beginning of the game.
Under error-free localization, the distance remains bounded as $\D{t} \leq \D{0}$.
We analyze how $|| \D{t} - \D{0}||$ grows under the relative error model,
as a function of $\lam$.
Our main results are as follows.

\paragraph{Our Results.}

We show that the simple greedy strategy of ``always move to the
\emph{observed} location of the target'' achieves $\D{t} \;\leq\;
\D{0} + t/\lam^{2}$.  That is, the target's distance from the tracker
can grow at most at the rate of $O(\lam^{-2})$, the inverse quadratic
function of the localization parameter. We prove this rate to be
worst-case optimal with a matching lower bound: a strategy for the
target that ensures that, under the relative error model, it can
increase its distance as $\D{t} \: \geq \: \D{0} + \Omega(t
/\lam^{2})$.

We then extend this analysis to environments with polygonal obstacles,
and show that the tracker can increase its distance by $\Omega(t)$ in
time $t$ \emph{for any finite $\lam$}. This is unsurprising because
two points within a small margin of sensing error can be far apart in
free-space, thereby fooling the tracker into ``blind alleys.''  More
surprisingly, however, if we adopt a localization error that is
proportional to the \emph{geodesic} distance (and not the Euclidean
distance) between the target and the tracker, then the distance
increases at a rate of $\Theta (\lam^{-1})$. This bound is also tight
within a constant factor: the tracker can maintain a distance of
$\D{t} \;\leq\; \D{0} + O(t/\lam)$ by the greedy strategy, while the
target has a strategy to ensure that the distance function grows as at
least $\D{t} \;\geq\; \D{0} + \Omega(t/\lam)$.

Our analysis also helps answer some other questions related to
tracking performance.  For instance, a natural way to achieve good
tracking performance in the presence of noisy sensing is to let the
tracker move at a faster speed than the target.  Then, what is the
minimum speedup necessary for the tracker to reach the target (or,
keep within a certain distance of it)?  We derive upper and lower
bounds for this speedup function, which are within a constant
factor of each other as long as $\lam \geq 2$.

\paragraph{Organization.}

The paper is organized as follows. Section~\ref{sec:unobstructed}
investigates the tracking problem in the Euclidean plane without
obstacles.  Section~\ref{sec:speed} derives bounds on the speedup
function.  Section~\ref{sec:obstructed} explores the tracking
performance in the presence of polygonal
obstacles. Section~\ref{sec:dimension} briefly discussed the
straight-forward extension of our results to $d$-dimensions.
Section~\ref{sec:conclusion} offers a summary and concluding
remarks.

\section{Tracking in the Unobstructed Plane}
\label{sec:unobstructed}

We begin with the simple setting in which a tracking agent $\p{}$ wants to follow 
a moving target $\q{}$ in the two-dimensional plane without any obstacles.
We show that the trivial ``aim for the target's observed location'' achieves 
essentially the best possible worst-case performance. We first prove the upper 
bound on the derivative $\Dt{t}$ of the distance function $\D{t}$, and then 
describe an adversary's strategy that matches this upper bound.

\subsection{Tracker's Strategy and the Upper Bound}

Our tracker uses the following obvious algorithm, whose performance is analyzed
in Theorem~\ref{thm:unobstructed-upper} below.

\paragraph{\greedy .}
At time $t$, the tracker $\p{}$ moves directly towards the target's observed location $\qa{t}$.

\begin{theorem}
By using {\greedy}, the tracker can ensure that $\D{t} \leq \D{0} + O(t / \lam^{2})$, for all $t \geq 0$.
  \label{thm:unobstructed-upper}
\end{theorem}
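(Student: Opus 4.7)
The plan is to bound the instantaneous rate $\Dt{t}$ at which the distance changes and then integrate. At a generic time $t$, let $\hat{u}$ be the unit vector from $\p{t}$ to $\q{t}$, let $\hat{v}_P$ be the tracker's unit velocity (which, under \greedy, is the unit vector from $\p{t}$ toward $\qa{t}$), and let $\hat{v}_Q$ be the target's (adversarial) unit velocity. Differentiating $\D{t} = ||\p{t} - \q{t}||$ gives
$$\Dt{t} \;=\; \hat{u} \cdot \bigl(\hat{v}_Q - \hat{v}_P\bigr).$$
The worst choice by the adversary is $\hat{v}_Q = \hat{u}$, contributing $+1$, so the only quantity left to control is $\hat{u} \cdot \hat{v}_P = \cos\theta$, where $\theta$ is the angle at $\p{t}$ between the direction to the true target $\q{t}$ and the direction to its observed image $\qa{t}$.

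The key geometric step is to bound $\theta$ using the noise model. Since $||\q{t} - \qa{t}|| \leq \D{t}/\lam$, the observed point $\qa{t}$ lies in a disk of radius $\D{t}/\lam$ centered at $\q{t}$; viewed from $\p{t}$, which is at distance $\D{t}$ from $\q{t}$, this disk subtends a half-angle of at most $\arcsin(1/\lam)$, because the factor $\D{t}$ cancels. Hence $\sin\theta \leq 1/\lam$, so $\cos\theta \geq \sqrt{1 - 1/\lam^{2}}$, and rationalizing gives
$$\Dt{t} \;\leq\; 1 - \sqrt{1 - 1/\lam^{2}} \;=\; \frac{1/\lam^{2}}{1 + \sqrt{1 - 1/\lam^{2}}} \;\leq\; \frac{1}{\lam^{2}}.$$
Integrating from $0$ to $t$ yields $\D{t} \leq \D{0} + t/\lam^{2}$, which is the claimed $O(t/\lam^{2})$ rate.

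The main obstacle I anticipate is the geometric step above, and in particular the observation that under the relative-error model the angular uncertainty as seen by the tracker is independent of $\D{t}$. This cancellation is exactly what makes the inverse-quadratic rate possible; with an absolute-error model the same computation would instead give an angle that grows as the tracker falls behind, which is why that regime yields the weaker $\Theta(t^{1/3})$ bound cited earlier. A minor technical point is that $\Dt{t}$ need not exist everywhere (for instance, when $\p{t} = \qa{t}$ and the greedy direction is undefined), but since $\D{t}$ is $2$-Lipschitz, it suffices to know the bound almost everywhere, after which the fundamental theorem of calculus for Lipschitz functions completes the argument.
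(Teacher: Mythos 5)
Your proof is correct and reaches the paper's exact intermediate bounds, $\Dt{t} \leq 1 - \cos\gamma$ and then $\Dt{t} \leq 1 - \sqrt{1-1/\lam^2} \leq 1/\lam^2$, but the execution is cleaner in two places. First, you derive $\Dt{t} = \hat{u}\cdot(\hat{v}_Q - \hat{v}_P)$ by differentiating $||\p{t}-\q{t}||$ directly, whereas the paper carries out a fairly long law-of-cosines expansion of $\D{t+\Delta t}$ in the triangle $abc$ and takes a limit; your route makes the decomposition into ``adversary contributes at most $+1$'' and ``tracker contributes $-\cos\theta$'' immediate. Second, you replace the Taylor-series argument for $1 - \sqrt{1-1/\lam^2} \leq 1/\lam^2$ with the exact rationalization $1-\sqrt{1-x} = \frac{x}{1+\sqrt{1-x}} \leq x$, which is both shorter and avoids any convergence discussion. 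The one genuinely shared key step is identical in both proofs: the scale-invariance observation that the uncertainty disk of radius $\D{t}/\lam$ at distance $\D{t}$ subtends a half-angle with $\sin\theta \leq 1/\lam$ independent of $\D{t}$. You also add a small but legitimate technical remark the paper omits, namely that $\D{t}$ is Lipschitz so the a.e.\ derivative bound integrates to the stated conclusion; this covers the degenerate instants (e.g., $\p{t}=\qa{t}$) where the greedy direction is ill-defined. No gaps.
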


\begin{proof} 
Consider the true and the observed positions of the target, namely  $\q{t}$ and $\qa{t}$,
respectively, at time $t$, and let $\gamma$ be the angle formed by them at $\p{t}$. 
See Figure~\ref{fig:unobstructed-upper}. Consider an arbitrarily small time period 
$\Delta t$ during which $\p{}$ moves towards $\qa{t}$ and $\q{t}$ moves away from 
$\p{t}$. We want to compute the derivative of the distance function, given as
Equation~(\ref{eqn:upperLimit}).

  \begin{equation}
    \label{eqn:upperLimit}
    \Dt{t} \:=\: \lim_{\Delta t \rightarrow 0}\frac{\D{t + \Delta t} - \D{t}}{\Delta t}
  \end{equation}

\begin{figure}[htbp]
\begin{center}
\includegraphics[height=5cm]{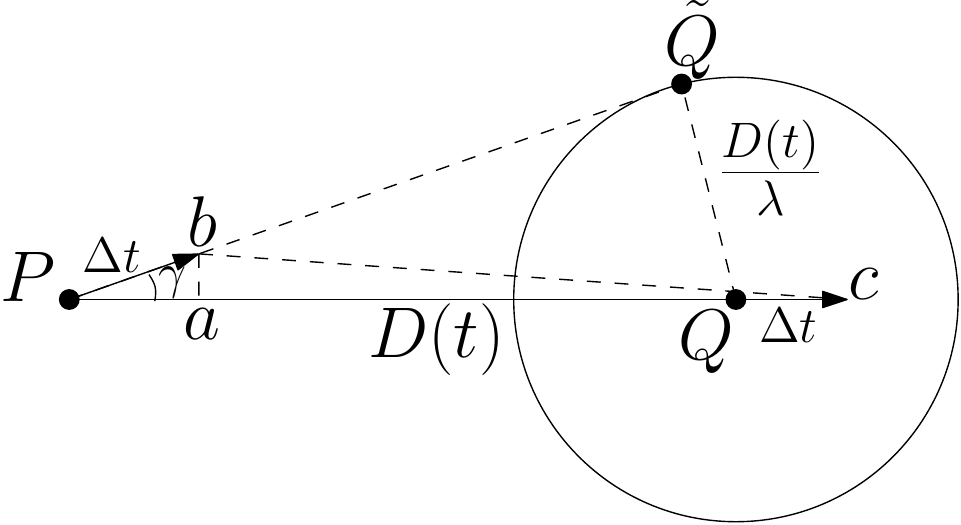}
\caption{Proof of Theorem~\ref{thm:unobstructed-upper}.}
\label{fig:unobstructed-upper}
\end{center}
\end{figure}

The new distance between the target and the tracker is given by $bc$
in Fig.~\ref{fig:unobstructed-upper}.  In the triangle $abc$, we have
$ab = \Delta t\sin\gamma$ and $ac = \D{t} + \Delta t - \Delta
t\cos\gamma$.  We, therefore, can bound $\D{t+\Delta t}$ as follows
(where the final inequality uses the fact $\sqrt{1+x}\leq 1 + \frac{x}{2})$:
    \begin{align*}
      \D{t + \Delta t}
        & = \sqrt{(\Delta t\sin \gamma)^2 + (\D{t} + \Delta t - \Delta t\cos\gamma)^2}\\
 	& = \sqrt{\Delta t^2\sin^2 \gamma + \D{t}^2 + 2\D{t}\Delta t(1 - \cos \gamma) + 
	\Delta t^2-2\Delta t^2\cos \gamma + \Delta t^2\cos^2 \gamma}\\
        & =  \sqrt{\Delta t^2 + \D{t}^2 + 2\D{t}\Delta t(1 - \cos \gamma) +\Delta t^2 
	-2\Delta t^2\cos \gamma}\\
        & = \sqrt{(\D{t}+\Delta t)^2 - 2\D{t}\Delta \cos \gamma  +\Delta t^2 -2\Delta 
		t^2\cos \gamma}\\
        & = \sqrt{(\D{t}+\Delta t)^2 -2\Delta t(\D{t} + \Delta t) +\Delta t^2 - 
		2\D{t}\Delta \cos \gamma -2\Delta t^2\cos \gamma + 2\Delta t(\D{t} + \Delta t)}\\
        & = \sqrt{(\D{t}+\Delta t - \Delta t)^2 - 2\D{t}\Delta t\cos \gamma  
		-2\Delta t^2\cos \gamma + 2\Delta t(\D{t} + \Delta t)}\\
        & = \sqrt{\D{t}^2 + 2\Delta t(\D{t} + \Delta t)(1 - \cos \gamma)}\\
        & = \D{t}\sqrt{1 + 2\Delta t(\D{t} + \Delta t)(1 - \cos \gamma)/\D{t}^2}\\
        & \leq \D{t} + (\Delta t)(1 + \Delta t/\D{t})(1 - \cos \gamma)
    \end{align*}

Returning to Equation~(\ref{eqn:upperLimit}), we get

\cut{
 \begin{equation*}
    \begin{split}
      \Dt{t} & = \lim_{\Delta t \rightarrow 0} \frac{\D{t + \Delta t} - \D{t}}{\Delta t}\\
            & \leq \lim_{\Delta t \rightarrow 0} (1 + \Delta t /\D{t})(1- \cos \gamma)\\
            & = 1 - \cos \gamma
    \end{split}
 \end{equation*}
}

\[
\Dt{t} \:=\: \lim_{\Delta t \rightarrow 0} \frac{\D{t + \Delta t} - \D{t}}{\Delta t}
            \:\leq\: \lim_{\Delta t \rightarrow 0} (1 + \Delta t /\D{t})(1- \cos \gamma)
            \:\:=\:\: 1 - \cos \gamma
\]

Finally, since $\sin \gamma \leq \frac{1}{\lam}$, we get $\Dt{t} \leq 1 - \sqrt{1-\frac{1}{\lam^2}}$,
which simplifies by the Taylor series expansion:

\cut{
\begin{equation*}
  \begin{split}
    \Dt{t} & \leq 1 - (1 - \frac{1}{2\lam^2} - \frac{1}{8\lam^4} - \cdots)\\
          & = \frac{1}{2\lam^2} + \frac{1}{8\lam^4} + \cdots \leq \frac{1}{\lam^2}
  \end{split}
\end{equation*}
}%

\[
\Dt{t} \: \leq \:1 - (1 - \frac{1}{2\lam^2} - \frac{1}{8\lam^4} - \cdots)
      \: = \:\frac{1}{2\lam^2} + \frac{1}{8\lam^4} + \cdots 
     \:\:\leq\:\: \frac{1}{\lam^2}
\]

This completes the proof that $\D{t} \:\leq\: \D{0} + t / \lam^2$.
\end{proof}

\subsection{Target's Strategy and the Lower Bound}

We now show that this bound is asymptotically tight, by demonstrating a strategy for 
the target to grow its distance from the tracker at the rate of 
$\D{t} \:\geq\: \D{0} + \Omega ( t / \lam^2 )$, for all $t \geq 0$.
We think of the target as an adversary who can choose
its observed location at any time subject only to the constraints of
the error bound: $||\q{t} - \qa{t}|| \leq \frac{1}{\lam}(||\p{t} - \q{t}||)$.
(Recall that the tracker only observes the location $\qa{t}$, and has
no direct knowledge of either the parameter $\lam$ or the distance
$||\p{t} - \q{t}||$.  Those quantities are only used in the
analysis. However, the lower bound holds even if the tracker knows the uncertainty 
disk, namely, the localization error $\frac{1}{\lam}(||\p{t} - \q{t}||)$.)

In order to analyze the lower bound, we divide the time into
\emph{phases}, and show that the distance from the tracker increases
by a \emph{multiplicative factor} in each phase, resulting in a growth
rate of $\Omega(1 + \lam^{-2})$. If the $i$th phase begins at time $t_i$,
then we let $d_i=||\q{t_i} - \p{t_i}||$ denote the distance between
the target and the tracker at $t_i$. During the $i$th phase, the
target maintains the following invariant for a constant $0 < \alpha <
1$ to be chosen later.

\paragraph{\feasability .} \label{inv:alphaInvar} 
Throughout the $i$th phase, the target moves along a path $\q{t}$ such that
$||\q{t} - \p{t}|| \geq \alpha d_i$, for all times $t$, and all reported locations 
satisfy $||\q{t} - \qa{t}|| \leq \alpha d_i / \lam$.

\paragraph{}

See Figure~\ref{fig:unobstructed-lower} for an illustration. Consider the isosceles triangle 
with vertices at $\q{t_i}$, $q_a$ and $q_b$, whose base $q_a q_b$ is perpendicular to the 
line $\p{t_i}\q{t_i}$.  The equal sides of the triangle have length $2d_i$, the base has 
length $2 \alpha d_i / \lam$, and let $q_c$ be the midpoint of the base. The target's strategy 
is to move from $\q{t_i}$ to either $q_a$ or $q_b$, and \emph{report its location $\qa{t}$
at the closest point on the line $\q{t_i} q_c$;} i.e. at all times, $\qa{t}$ is the perpendicular 
projection of $\q{t}$ onto the line $\q{t_i} q_c$. By the symmetric construction, and the
choice of the points $q_a$ and $q_b$, the tracker cannot tell whether the target is
moving to $q_a$ or $q_b$. Thus, any deterministic tracker makes an incorrect choice
in one of the two possible scenarios. For the \emph{worst-case performance bound}, 
we can equivalently assume that the target \emph{non-deterministically guesses} the 
tracker's intention, and moves to the better of the two possible locations, $q_a$ or $q_b$. 
The tracker makes this choice based on whether the tracker is on or below the line $\q{t_i} q_c$, 
or not. In the former case, the target moves to $q_a$, and to to $q_b$ otherwise. 
The $i$th phase terminates when the target reaches either $q_a$ or $q_b$, and the next phase
begins.
(We note that, after $i$ phases, there are $2^i$ possible choices made by the tracker, 
reflected in whether it is above or below the line $\q{t_i} q_c$ at the conclusion
of each phases. For each of these possible ``worlds'' there is a corresponding deterministic 
strategy of the target that ``fools'' the tracker in every phase, resulting in the maximum 
distance increase.)
There is one subtle point worth mentioning here. It is possible that during the phase, the distance 
between the players may shrink if the tracker temporarily moves towards the same final location as 
the target---however, our {\feasability} ensures that that the target's noisy location remains 
within the permissible error bound throughout the phase.
The following lemma shows that this simple strategy of the target can maintain the {\feasability} 
for any choice of $\alpha \leq 0.927$.

\begin{figure}
  \begin{center}
    \subfigure[] {
      \includegraphics[height=2cm]{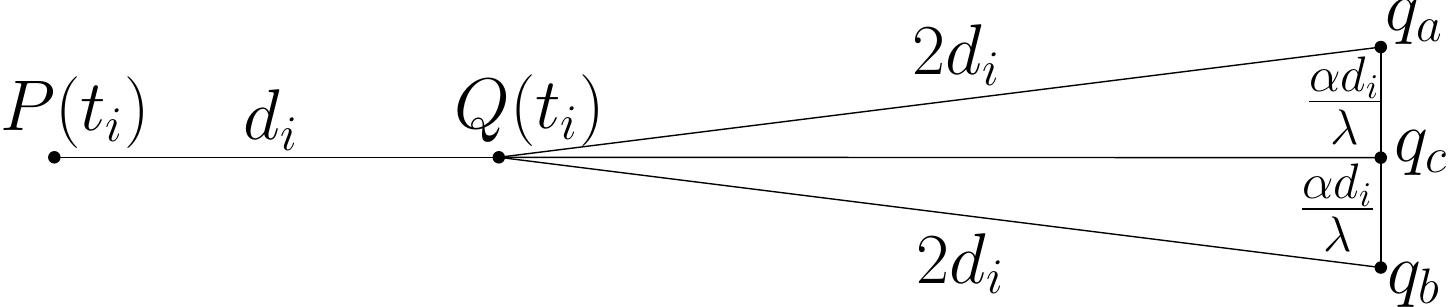}
      \label{fig:unobstructed-lower}
    }
    \subfigure[] {
      \includegraphics[height=2cm]{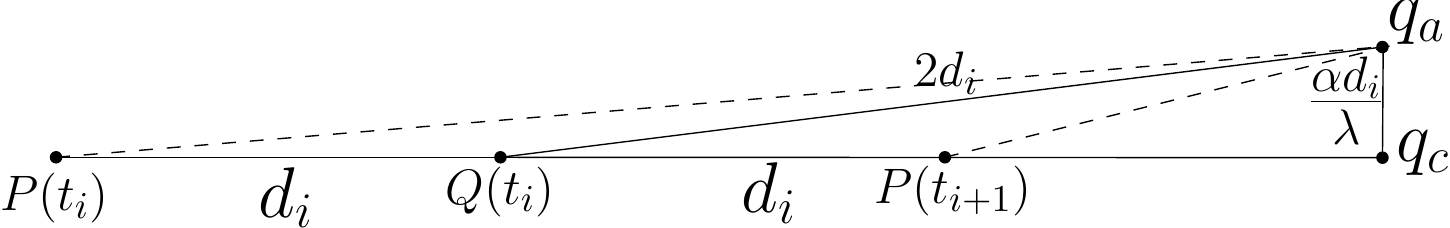}
      \label{fig:unobstructed-lower-geom}
    }
  \end{center}
  \caption{Target's strategy during the $i$th
    phase~\subref{fig:unobstructed-lower}, and proofs of
    Lemmas~\ref{lem:alphaNonSpeed} and~\ref{lem:lowerRoundChange}
    \subref{fig:unobstructed-lower-geom}.}
\end{figure}

\begin{lemma} \label{lem:alphaNonSpeed}
  The target can maintain the {\feasability} for any $\alpha \leq 0.927$.
\end{lemma}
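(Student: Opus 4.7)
The plan is to work in coordinates with $\p{t_i}$ at the origin and $\q{t_i}=(d_i,0)$, so that the line $\q{t_i}q_c$ is the $x$-axis and the target's two options are $q_a=(x_c,y_c)$ and $q_b=(x_c,-y_c)$. The isosceles-triangle constraints $\|\q{t_i}-q_a\|=2d_i$ and $y_c=\alpha d_i/\lam$ force $x_c=d_i\bigl(1+\sqrt{4-\alpha^2/\lam^2}\bigr)$, and determine the half-angle $\theta$ at $\q{t_i}$ via $\sin\theta=\alpha/(2\lam)$. At phase-time $s\in[0,2d_i]$, the target is at $\q{t_i}+(s\cos\theta,\pm s\sin\theta)$, and $\qa{t}=\q{t_i}+(s\cos\theta,0)$ is its perpendicular projection onto the $x$-axis. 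The sensor half of the {\feasability} is then immediate: $\|\q{t}-\qa{t}\|=s\sin\theta\le 2d_i\sin\theta=\alpha d_i/\lam$, which also respects the admissibility ratio $\|\q{t}-\qa{t}\|\le\|\p{t}-\q{t}\|/\lam$ once the distance bound below is in place.

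For the distance bound $\|\p{t}-\q{t}\|\ge\alpha d_i$, the core step is a symmetry reduction. Let $d_a(s),d_b(s)$ be the distances from an arbitrary tracker position $\p{s}=(p_x,p_y)$ to the two candidate target positions at phase-time $s$, and set $h(s)=s\sin\theta$. A direct expansion gives the parallelogram identity
\[
d_a(s)^2+d_b(s)^2 \;=\; 2\bigl(\|\p{s}-\qa{s}\|^2+h(s)^2\bigr).
\]
Because the reports $\qa{}$ are identical under the $q_a$- and $q_b$-trajectories, the tracker's entire path is determined before the target commits, and the target may non-deterministically pick the side opposite to the tracker. Any excursion of $p_y$ away from zero only enlarges the relevant $(p_y\pm h)^2$ summand for the chosen side, so the tracker cannot do better than to stay on the $x$-axis; in that regime $d_a(s)=d_b(s)$ and the target is indifferent.

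With $p_y\equiv 0$, the tracker's best response is greedy, $p_x(s)=s$, since any slower motion only widens the gap to $\qa{s}_x=d_i+s\cos\theta$. A short expansion yields
\[
\|\p{s}-\q{s}\|^2 \;=\; (d_i-s(1-\cos\theta))^2+s^2\sin^2\theta \;=\; d_i^2+2(1-\cos\theta)\,s(s-d_i),
\]
which is minimized over $s\in[0,2d_i]$ at $s=d_i/2$ with value $d_i^2(1+\cos\theta)/2=d_i^2\cos^2(\theta/2)$. The invariant therefore reduces to $\cos(\theta/2)\ge\alpha$; squaring and substituting $\sin\theta=\alpha/(2\lam)$ gives $1-\alpha^2\ge 1/(16\lam^2)$, whose worst case is $\lam=1$, and a direct numerical check confirms it for every $\alpha\le 0.927$ (with slack, since the tight threshold at $\lam=1$ is $\sqrt{15}/4\approx 0.968$).

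The main obstacle I anticipate is making the WLOG-on-axis reduction fully rigorous against trackers that cross the $x$-axis during the phase, since the target must commit to a single side for the entire interval $[t_i,t_i+2d_i]$. I plan to handle this by observing that at any crossing time $t^*$ one already has $d_a(t^*)=d_b(t^*)\ge d_i\cos(\theta/2)$ by the on-axis bound, and by combining the unit-speed constraint on $\p{}$ with a triangle inequality between the two candidate target positions $(d_i+s_a\cos\theta,s_a\sin\theta)$ and $(d_i+s_b\cos\theta,-s_b\sin\theta)$---whose separation $\sqrt{(s_a-s_b)^2\cos^2\theta+(s_a+s_b)^2\sin^2\theta}$ exceeds what the tracker can traverse in time $|s_a-s_b|$---to rule out simultaneous near-approaches of $\p{}$ to both trajectories.
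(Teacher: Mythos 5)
Your setup matches the paper's (coordinates at $\p{t_i}$, half-angle $\theta$ with $\sin\theta=\alpha/(2\lam)$, reports projected to the axis, target committing at phase end to the side opposite the tracker), but the core WLOG claim---that the tracker might as well stay on the $x$-axis---is false, and you flag the difficulty without actually closing it. Your sentence ``any excursion of $p_y$ away from zero only enlarges the relevant $(p_y\pm h)^2$ summand'' tacitly lets the target respond instant-by-instant, whereas the target must commit to one side for the whole phase. A tracker can therefore dive toward the target's eventual arm and still recover to the ``wrong'' side by phase end. With $d_i=1$ and $\sin\theta=1/2$, your on-axis minimum is $\cos(\pi/12)\approx 0.966$; but a tracker heading straight at the lower arm's position at $s=1$, namely $(1+\cos\theta,-\sin\theta)\approx(1.866,-0.5)$ of norm $\approx 1.932$, reaches distance $\approx 1.932-1=0.932<0.966$ from the lower trajectory at $s=1$ with $p_y\approx-0.259$, and then has a full unit of time to climb back to $p_y\ge 0$, which forces the target onto the lower arm as planned. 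The parallelogram identity controls $d_a^2+d_b^2$, but the {\feasability} requires a lower bound on the \emph{smaller} of $d_a,d_b$, which can be far below the mean. Your proposed crossing-time patch does not repair this either: the tracker need never be near both arms simultaneously, only graze one while its endpoint constraint is on the other side.

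The paper sidesteps trajectory optimization entirely. It uses only the speed constraint $||\p{t_i+s}-\p{t_i}||\le s$ and the reverse triangle inequality: $\D{t_i+s}\ge ||\q{t_i+s}-\p{t_i}||-s$, a quantity identical for either arm, decreasing in $s$, and equal to $||\p{t_i}-q_a||-2d_i = d_i\bigl(\sqrt{5+4\sqrt{1-\alpha^2/(4\lam^2)}}-2\bigr)$ at $s=2d_i$. Requiring this to be at least $\alpha d_i$ yields the paper's threshold near $0.927$. The fact that your argument produced the strictly looser threshold $\sqrt{15}/4\approx 0.968$ is itself a symptom of the flaw: you lower-bounded the worst-case minimum distance by a value the tracker can beat. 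If you want to salvage the write-up, keep the geometric setup but replace the on-axis reduction with the reverse-triangle-inequality bound and solve $\alpha\le\sqrt{5+4\sqrt{1-\alpha^2/(4\lam^2)}}-2$.
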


\begin{proof}
  Consider an arbitrary phase $i$. By construction, we have $||\q{t}-\qa{t}||
  \leq \frac{\alpha d_i}{\lam}$ throughout this phase, so we only need to show 
   $\D{t} \geq \alpha d_i$. There is one subtle point worth mentioning here. 
   While the target's strategy will ensure that its distance from the tracker grows
	by a certain multiplicative factor \emph{at the end of the phase}, the distance between 
	the players may shrink during the phases. This happens when the tracker temporarily 
	moves towards the same final location as the target. In spite of this temporary ``lucky'' 
	guess by the tracker, we need to ensure that the target's noisy location remains within 
	the permissible error bound throughout the phase. The constant $\alpha$ is introduced 
	precisely to guarantee this validity, and we arrive at its value as follows.

  Let $d_{i+1}$ be the distance between $\p{}$ and $\q{}$ if both
  moved toward $q_a$ for the duration of phase $i$. Note that
  $d_{i+1}$ is the length of the segment $\p{t_i}q_a$ minus $2d_i$, as
  shown in Figure~\ref{fig:unobstructed-lower-geom}. The length of
  $\p{t_i}q_a$ can be calculated from the right triangle $q_a \p{t_i}
  q_c$, while the length of $q_a q_c$ is known by construction.
  Finally, $\q{t_i}q_c$ has length $d_i$ less than $\p{t_i}q_c$.
  Thus, we have:

  \begin{equation*}
    \begin{split}
      d_{i+1} & = \sqrt{(\sqrt{4d_i^2 - \frac{d_i^2\alpha^2}{\lam^2}} + d_i)^2 + \frac{d_i^2\alpha^2}{\lam^2}} - 2d_i\\
             & = \sqrt{5d_i^2 - \frac{d_i^2\alpha^2}{\lam^2} + 2d_i\sqrt{4d_i^2 - \frac{d_i^2\alpha^2}{\lam^2}} + \frac{d_i^2\alpha^2}{\lam^2}} - 2d_i\\
             & = d_i\sqrt{5 + 4\sqrt{1 - \frac{\alpha^2}{4\lam^2}}}- 2d_i
    \end{split}
  \end{equation*}

In order to satisfy the {\feasability}, we must choose an $\alpha$
such that the following inequality holds:

  \begin{equation*}
    \begin{split}
      \alpha d_i \:&\: \leq d_i\sqrt{5 + 4\sqrt{1 - \frac{\alpha^2}{4\lam^2}}} - 2d_i\\
       \alpha^2 + 4\alpha + 4 & \leq 5 + 4\sqrt{1 -\frac{\alpha^2}{4\lam^2}}\\
       \alpha^2 + 4\alpha + 4 & \leq 5 + 4 - \frac{\alpha^2}{2\lam^2}\\
        \alpha^2(1+\frac{1}{2\lam^2}) + 4\alpha - 5 & \leq 0
    \end{split}
  \end{equation*}

This gives the following upper bound:

 \begin{equation*}
    \alpha \leq \frac{-4 \pm \sqrt{16 +
        4(1+\frac{1}{2\lam^2})5}}{2(1+\frac{1}{2\lam^2})} \leq \frac{1}{3}(\sqrt{46}-4)
 \end{equation*}
%

%
%
%
%
\end{proof}

The preceding lemma shows that our construction satisfies the {\feasability}, and so we can
now lower bound the distance growth during a single phase.

\begin{lemma}
  At the start of phase $i+1$, we have $d_{i+1} \:\geq\: d_i\sqrt{1
    +\frac{\alpha^2}{2\lam^2}}$, where $\alpha = 0.927$ is an absolute
  constant.
  \label{lem:lowerRoundChange}
\end{lemma}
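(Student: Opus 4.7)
The plan is to identify the worst-case tracker response and then bound the resulting distance geometrically. Since the reported trajectory $\qa{t}$ lies entirely on the line through $\q{t_i}$ and $q_c$, any deterministic tracker position $\p{t_{i+1}}$ at the end of the phase is independent of whether the target's true destination is $q_a$ or $q_b$; the target then chooses whichever of these two points is farther. By the symmetry of $q_a$ and $q_b$ about the line $\q{t_i}q_c$, the quantity $\max(\|\p{t_{i+1}} - q_a\|, \|\p{t_{i+1}} - q_b\|)$ is minimized (over all possible tracker positions) when $\p{t_{i+1}}$ lies on this line, since any component perpendicular to it strictly increases the distance to the farther endpoint. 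So, for the lower bound, I may assume the tracker ends the phase on the axis of symmetry.

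Next, I would bound how close to $q_c$ the tracker can possibly get in the phase duration $\tau = 2d_i$ (the target's travel time to $q_a$ at unit speed). From the isosceles triangle we have $\|\q{t_i} - q_c\| = d_i\sqrt{4 - \alpha^2/\lam^2}$ (by Pythagoras on the right triangle with hypotenuse $\q{t_i}q_a$ of length $2d_i$ and leg $q_aq_c$ of length $\alpha d_i/\lam$), and since $\p{t_i}$ lies on the same line at distance $d_i$ from $\q{t_i}$ on the opposite side from $q_c$, we get $\|\p{t_i} - q_c\| = d_i(1 + \sqrt{4 - \alpha^2/\lam^2})$. The tracker moves a total distance of at most $2d_i$ during the phase, so by the triangle inequality, $\|\p{t_{i+1}} - q_c\| \geq d_i(\sqrt{4 - \alpha^2/\lam^2} - 1)$, which is non-negative for $\lam \geq 1$ and $\alpha < 1$.

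Combining these ingredients via one more application of Pythagoras (since $\p{t_{i+1}}$ lies on the line $\q{t_i}q_c$ and $q_a$ sits at perpendicular distance $\alpha d_i/\lam$ from $q_c$), I would conclude
\[
  d_{i+1}^2 \;\geq\; d_i^2\Bigl[\bigl(\sqrt{4 - \alpha^2/\lam^2} - 1\bigr)^2 + \alpha^2/\lam^2\Bigr] \;=\; d_i^2\Bigl[5 - 2\sqrt{4 - \alpha^2/\lam^2}\Bigr].
\]
The claim then reduces to the inequality $5 - 2\sqrt{4 - \alpha^2/\lam^2} \geq 1 + \alpha^2/(2\lam^2)$, i.e., $2 - \alpha^2/(4\lam^2) \geq \sqrt{4 - \alpha^2/\lam^2}$; both sides are positive for $\lam \geq 1$, so squaring reduces it to the trivial bound $\alpha^4/(16\lam^4) \geq 0$. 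I expect the main conceptual step to be the symmetry reduction in the first paragraph---justifying that the tracker's best possible response lies on the axis $\q{t_i}q_c$---while the rest amounts to short geometric and algebraic calculations.
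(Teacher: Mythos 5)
Your proof is correct and follows essentially the same route as the paper: reduce to the case that the tracker ends on the axis $\q{t_i}q_c$, compute $d_{i+1}$ from the right triangle $q_a \p{t_{i+1}} q_c$, and then apply the algebraic bound. The only difference is that you make the reduction explicit via the minimax/projection argument and derive $\|\p{t_{i+1}} - q_c\| \geq d_i(\sqrt{4 - \alpha^2/\lambda^2} - 1)$ by the triangle inequality, whereas the paper simply asserts that ``the minimum value of $d_{i+1}$ is at least as large as if $\p{}$ had moved directly to $q_c$ by distance $2d_i$''; your version fills in that small gap but the substance is identical (the paper's $5 - 4\sqrt{1-\alpha^2/4\lambda^2}$ is exactly your $5 - 2\sqrt{4 - \alpha^2/\lambda^2}$, and their use of $\sqrt{1-x}\leq 1-x/2$ matches your squaring step).
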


\begin{proof}
  Suppose, without loss of generality, that the target is at $q_a$ at
  the termination of the $i$th phase, which means the tracker is on or
  below the line $\q{t_i}q_c$. By the unit speed assumption, the
  target needs exactly $2d_i$ time for this move. The minimum value of
  $d_{i+1}$ is at least as large as if $\p{}$ had moved directly to
  $q_c$ by distance $2d_i$, as shown in
  Figure~\ref{fig:unobstructed-lower-geom}. We can calculate $d_{i+1}$
  from the right triangle $q_a \p{t_{i+1}} q_c$, as follows:

 \begin{equation*}
   \begin{split}
     d_{i+1} & \geq \sqrt{(\sqrt{4d_i^2 - \frac{\alpha^2d_i^2}{\lam^2}} -
              d_i)^2+\frac{\alpha^2d_i^2}{\lam^2}}\\
            & = \sqrt{d_i^2 - 2d_i\sqrt{4d_i^2 -
              \frac{\alpha^2d_i^2}{\lam^2}} +4d_i^2 -\frac{\alpha^2d_i^2}{\lam^2} +
              \frac{\alpha^2d_i^2}{\lam^2}}\\
            & = d_i\sqrt{5 - 4\sqrt{1 -\frac{\alpha^2}{4\lam^2}}}\\
            & \geq d_i\sqrt{5 - (4 - \frac{\alpha^2}{2\lam^2})}\\
            &  = d_i\sqrt{1 + \frac{\alpha^2}{2\lam^2}}
   \end{split}
 \end{equation*}

\end{proof}

We can now prove the main result of this section.

\begin{theorem}
  Under the relative error localization model, a target can increase its distance from an 
equally fast tracker at the rate of $\Omega(\lam^{-2})$.  In other words, the target can ensure that
  $\D{t} \:\geq\: \D{0} + \Omega(t /\lam^2)$ after any phase ending at time $t$.
  \label{thm:unobstructed-lower}
\end{theorem}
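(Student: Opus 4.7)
The plan is to chain together the per-phase multiplicative growth from Lemma~\ref{lem:lowerRoundChange} to get a time-averaged additive growth rate of $\Omega(\lam^{-2})$. The key observation is that phase $i$ has a known duration, so a multiplicative gain per phase can be converted into an additive rate per unit time in a clean way.

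First I would record the duration of phase $i$. By the target's strategy, during phase $i$ it walks from $\q{t_i}$ to either $q_a$ or $q_b$, a distance of $2d_i$ along the equal side of the isosceles triangle. Since the target moves at unit speed, the phase lasts exactly $2d_i$ time units. Combining this with Lemma~\ref{lem:lowerRoundChange}, the distance gained during phase $i$ is
\[
d_{i+1} - d_i \:\geq\: d_i\Bigl(\sqrt{1 + \tfrac{\alpha^2}{2\lam^2}} - 1\Bigr),
\]
so the average rate of distance increase during phase $i$ is
\[
\frac{d_{i+1} - d_i}{2 d_i} \:\geq\: \frac{1}{2}\Bigl(\sqrt{1 + \tfrac{\alpha^2}{2\lam^2}} - 1\Bigr).
\]
Crucially, this bound is independent of $i$ and of $d_i$, so every phase gains distance at the same rate relative to time.

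Next I would lower-bound $\sqrt{1+x} - 1$ for small positive $x$. For $x \in (0,1]$ one has $\sqrt{1+x} - 1 \geq x/3$ (easily checked since $(1+x/3)^2 = 1 + 2x/3 + x^2/9 \leq 1 + x$ whenever $x \leq 3$). Applying this with $x = \alpha^2/(2\lam^2) \leq 1/2$ (valid since $\alpha < 1$ and $\lam \geq 1$), the per-phase rate is at least $\alpha^2/(12\lam^2)$, which is $\Omega(\lam^{-2})$ with $\alpha = 0.927$ an absolute constant.

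Finally I would stitch the phases together. For any time $t$ that coincides with the end of a phase, say phase $k-1$, we have $t = \sum_{i=0}^{k-1} 2d_i$ and $d_k - d_0 = \sum_{i=0}^{k-1} (d_{i+1} - d_i) \geq \sum_{i=0}^{k-1} 2 d_i \cdot \alpha^2/(12\lam^2) = \alpha^2 t / (12\lam^2)$. Since $\D{t} = d_k$ and $\D{0} = d_0$, this yields $\D{t} \geq \D{0} + \Omega(t/\lam^2)$, as claimed. I do not anticipate a genuinely hard step here; the only care needed is the elementary inequality for $\sqrt{1+x}-1$ and the observation that phase duration scales with $d_i$ in precisely the way that cancels the $d_i$ factor coming from the multiplicative gain, making the rate uniform across phases.
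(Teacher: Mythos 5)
Your proposal is correct and follows essentially the same route as the paper: both use Lemma~\ref{lem:lowerRoundChange} together with the $2d_i$ phase duration to show the per-phase growth rate is $\Omega(\lam^{-2})$ uniformly in $i$, the only cosmetic difference being that you bound $\sqrt{1+x}-1$ by the elementary inequality $\sqrt{1+x}-1 \geq x/3$ rather than the Taylor-series truncation the paper uses. Your explicit final step of summing $d_{i+1}-d_i \geq (\alpha^2/12\lam^2)\cdot 2d_i$ over phases and telescoping against $t = \sum_i 2d_i$ is a welcome elaboration of what the paper leaves implicit.
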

\begin{proof}
  The target follows the phase strategy, where that after the $i$th
  phase that lasts $2d_i$ time units, the distance between the tracker
  and the target is at least
  $d_i\sqrt{1+\frac{\alpha^2}{2\lam^2}}$. Therefore, the distance
  increases during the $i$th phase by at least the following
  multiplicative factor (using a Taylor series expansion):

\begin{equation*}
  \frac{d_i\sqrt{1+\frac{\alpha^2}{2\lam^2}} - d_i}{2d_i} \:\:=\:\:
  \frac{\sqrt{1+\frac{\alpha^2}{2\lam^2}} -1}{2} 
	\:\:\geq\:\:
  \frac{\alpha^2}{4\lam^2} - \frac{\alpha^4}{16\lam^4} 
	\:=\: \Omega(\frac{1}{\lam^2})
\end{equation*}

\cut{
Finally, we apply a Taylor series expansion and obtain the desired
result.

\begin{equation*}
  \frac{\sqrt{1+\frac{\alpha^2}{2\lam^2}} -1}{2} = \frac{(1 + \frac{\alpha^2}{2\lam^2} -
    \frac{\alpha^4}{8\lam^4} + \cdots) -1}{2} \geq
  \frac{\alpha^2}{4\lam^2} - \frac{\alpha^4}{16\lam^4} = \Omega(\frac{1}{\lam^2})
\end{equation*}
}
\end{proof}

\section{Trackability with a Faster Tracker}
\label{sec:speed}

The results of the previous section establish bounds on the relative
advantage available to the target by the localization imprecision. Its
distance from the tracker can grow at the rate of $\Theta(\lam^{-2})$
with time. A tracking system can employ a number of different
strategies to compensate for this disadvantage.  In this section, we
explore one such natural mechanism: \emph{allow the tracker to move at
  a faster speed than the target.} A natural question then is: what is
the minimum speedup necessary to cancel out the localization noise as
a function of $\lam$? We give bounds on the necessary and sufficient
speedups, which match up to small constant factors as long as $\lam \geq
2$. The general form of the speedup function is $(1 -
\frac{1}{\lam^2})^{-1/2}$.  The following theorem proves the
sufficiency condition.


\begin{theorem}
Suppose the target moves with speed one, and the tracker has speed $\s = 
\sqrt{\frac{1}{1-1/\lam^2}}$, where $\lam$ is the localization precision parameter.
Then, the tracker can maintain $\D{t} \leq \D{0}$, for all times $t \geq 0$.
  \label{thm:speed-upper}
\end{theorem}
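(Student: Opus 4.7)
The plan is to adapt the infinitesimal argument used in the proof of Theorem~\ref{thm:unobstructed-upper}, with the only change being that the tracker now moves at speed $\s$ rather than at unit speed. As before, let $\gamma$ denote the angle at $\p{t}$ subtended by the true location $\q{t}$ and the observed location $\qa{t}$. Since $\|\q{t} - \qa{t}\| \leq \|\p{t}-\q{t}\|/\lam$, the point $\qa{t}$ lies in a disk of radius $\D{t}/\lam$ centered at $\q{t}$, and trigonometry on the tangent line from $\p{t}$ to this disk yields $\sin\gamma \leq 1/\lam$, exactly as in Theorem~\ref{thm:unobstructed-upper}.

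Next I would carry out the worst-case derivative calculation over a time step $\Delta t$. The tracker moves toward $\qa{t}$ at speed $\s$, contributing a component $\s\cos\gamma$ along the ray from $\p{t}$ to $\q{t}$ (shrinking the distance) and a perpendicular component $\s\sin\gamma$. The worst-case target motion is to move directly away from $\p{t}$ at unit speed, contributing $+1$ along that same ray. Setting up coordinates with $\q{t}$ on the positive $x$-axis from $\p{t}$, the new separation is
\[
\D{t+\Delta t} \;=\; \sqrt{\bigl(\D{t} + \Delta t - \s\Delta t\cos\gamma\bigr)^2 + \bigl(\s\Delta t\sin\gamma\bigr)^2}.
\]
Expanding and taking the limit $\Delta t \to 0$ kills the $\Delta t^2$ terms and leaves
\[
\Dt{t} \;\leq\; 1 - \s\cos\gamma.
\]

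To finish, I would substitute $\cos\gamma \geq \sqrt{1 - 1/\lam^2}$ (from $\sin\gamma \leq 1/\lam$) and the given speed $\s = 1/\sqrt{1 - 1/\lam^2}$, giving $\s\cos\gamma \geq 1$, hence $\Dt{t} \leq 0$ at every instant. Integrating yields $\D{t} \leq \D{0}$ for all $t \geq 0$, as required.

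The only delicate step is justifying the worst-case reductions: that the adversarial target maximizes its instantaneous contribution by moving straight away from the tracker, and that the adversarial observed location $\qa{t}$ maximizes $\gamma$ by sitting on the tangent to the error disk. Both are one-line arguments (any perpendicular component of target velocity only increases $\D{t+\Delta t}$ to second order in $\Delta t$ and therefore drops out in the limit, and the angle $\gamma$ is monotone in the perpendicular offset of $\qa{t}$ from the line $\p{t}\q{t}$), but they should be stated explicitly so that the inequality $\Dt{t} \leq 1 - \s\cos\gamma$ is seen to hold against every admissible target strategy and every admissible noise realization.
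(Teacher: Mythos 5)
Your proposal is correct and follows essentially the same route as the paper: set up the same triangle as in Theorem~\ref{thm:unobstructed-upper} with the tracker's step scaled by $\s$, derive $\Dt{t} \leq 1 - \s\cos\gamma$ in the $\Delta t \to 0$ limit, and substitute $\sin\gamma \leq 1/\lam$ together with $\s = 1/\sqrt{1-1/\lam^2}$ to conclude $\Dt{t} \leq 0$. Your closing remark about explicitly justifying that radial target motion and tangent placement of $\qa{t}$ are worst-case is a reasonable tightening of an argument the paper leaves implicit, but it does not change the substance.
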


\begin{proof}
  Our analysis closely follows the proof of
  Theorem~\ref{thm:unobstructed-upper}, and calculates the increase in
  the distance during time $\Delta t$. During this time, the tracker
  is able to move $\s \Delta t$, while the target can move at most
  $\Delta t$.  We can then calculate distance at time $t + \Delta t$
  from the triangle $abc$ (Fig.~\ref{fig:unobstructed-upper}), where
  $ab = \s \Delta t\sin\gamma$ and $ac = \D{t} + \Delta t - \s\Delta
  t\cos\gamma$, as follows:

  \begin{equation*}
    \begin{split}
     \D{t+\Delta t}& = \sqrt{(\s\Delta t\sin \gamma)^2 + (\D{t} + \Delta t - \s\Delta t\cos\gamma)^2}\\
                   & = \sqrt{\s^2\Delta t^2\sin(\alpha)^2 + \D{t}^2 + 2\D{t}\Delta t(1 -
                      \s\cos(\alpha)) + \Delta t^2 - 2 \Delta t^2 \s\cos(\alpha) + \Delta
                      t^2\s^2\cos(\alpha)^2}\\
                   & = \sqrt{\s^2\Delta t^2 + \D{t}^2 + 2\D{t}\Delta t(1 -
                       \s\cos(\alpha)) + \Delta t^2 - 2 \Delta t^2 \s\cos(\alpha)}\\
                   & = \sqrt{(\D{t} + \Delta t)^2 + \s^2\Delta t^2  -
                     2\D{t}\Delta t\s\cos(\alpha) - 2 \Delta t^2 \s\cos(\alpha)}\\
                   &  = \sqrt{(\D{t} + \Delta t - \Delta t)^2 + \s^2\Delta t^2 + 2\Delta t(\D{t} +
                       \Delta t) -\Delta t^2
                        - 2\D{t}\Delta t\s\cos(\alpha) - 2
                       \Delta t^2 \s\cos(\alpha) }\\
                   & = \D{t}\sqrt{ 1 +\s^2\Delta t^2/\D{t}^2 - \Delta t ^2/\D{t}^2 + 2\Delta
                       t(\D{t} + \Delta t)(1 - \s\cos(\alpha))/\D{t}^2 }\\
       & \leq \D{t} + \s^2\Delta t^2/2\D{t} - \Delta t ^2/2\D{t} + \Delta t(1 + \Delta t/\D{t})(1 - \s\cos\gamma)
    \end{split}
  \end{equation*}

\cut{
This leads to

  \begin{equation*}
    \begin{split}
      \Dt{t} &  \leq \lim_{\Delta t \rightarrow 0} \frac{\D{t+\Delta t} - \D{t}}{\Delta t}\\
             & = \lim_{\Delta t \rightarrow 0} \s^2\Delta t/2\D{t} - \Delta t/2\D{t} + (1 +
               \Delta t/\D{t})(1 - \s\cos \gamma)\\
             & = 1 - \s \cos \gamma
    \end{split}
  \end{equation*}

  Since $\sin \gamma \leq 1/\lam$, we have $\Dt{t} \leq 1 - \s
  \sqrt{1-1/\lam^2}$.  In order to ensure $\Dt{t} \leq 0$, it suffices
  that $\s \geq \sqrt{\frac{1}{1-1/\lam^2}}$.
}

This allows us to bound $\Dt{t} \:\leq\: 1 - \s \cos \gamma$, from
which it follows that $\Dt{t} \leq 0$ as long as $\s \geq
\sqrt{\frac{1}{1-1/\lam^2}}$.
\end{proof}

We now show that if $\lam \geq 2$, this is the minimum speedup
necessary as a function of $\lam$, up to a small constant factor. We
use the phase-based strategy of Theorem~\ref{thm:unobstructed-lower},
however, the value of $\alpha$ determined by
Lemma~\ref{lem:alphaNonSpeed} is not sufficient to maintain the
{\feasability} in this case because of the higher speed of the
tracker.  Instead, the following lemma gives the sufficient choice of
$\alpha$.

\begin{lemma}
Let $\lam \geq 2$ and and $\alpha \leq 0.68$ be a constant.  Then, the {\feasability} can 
be maintained in any phase as long as $\s \:\leq\: \frac{1}{\sqrt{1-1/\lam^2}}$.
  \label{lem:alphaSpeed}
\end{lemma}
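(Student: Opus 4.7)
The plan is to emulate Lemma~\ref{lem:alphaNonSpeed}, reusing its isosceles-triangle construction verbatim but retuning $\alpha$ to absorb the tracker's speed advantage. With base $q_a q_b$ of length $2\alpha d_i/\lam$ perpendicular to $\p{t_i}\q{t_i}$, the target still reports $\qa{t}$ as the projection of $\q{t}$ onto $\q{t_i}q_c$, so the bound $||\q{t}-\qa{t}|| \leq \alpha d_i/\lam$ is automatic by construction. Only the Euclidean part of the invariant, $||\q{t} - \p{t}|| \geq \alpha d_i$, requires re-verification.

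I would first reduce the whole phase to a single endpoint check. Since a tracker constrained to speed $\s$ lies in the disk of radius $\s\tau$ around $\p{t_i}$ at time $t_i + \tau$, the minimum possible tracker-target distance at that time is $g(\tau) = ||\q{t_i+\tau} - \p{t_i}|| - \s\tau$, attained when the tracker heads straight toward the target's current position. The target moves at unit speed, so the derivative of $\tau \mapsto ||\q{t_i+\tau} - \p{t_i}||$ is bounded by $1$; hence $g'(\tau) \leq 1 - \s < 0$, and $g$ is strictly decreasing on $[0, 2d_i]$. Thus the invariant holds throughout the phase if and only if $g(2d_i) \geq \alpha d_i$, where the target has reached $q_a$.

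At this endpoint, the geometry of Lemma~\ref{lem:alphaNonSpeed} gives $||q_a - \p{t_i}|| = d_i\sqrt{5 + 4\sqrt{1 - \alpha^2/(4\lam^2)}}$, so the invariant collapses to
\[
\sqrt{5 + 4\sqrt{1 - \alpha^2/(4\lam^2)}} \;\geq\; 2\s + \alpha.
\]
Substituting $\s = 1/\sqrt{1 - 1/\lam^2}$, the left side is increasing in $\lam$ while $2\s$ is decreasing in $\lam$, so the binding case is $\lam = 2$. A direct substitution at $\lam = 2$ with $\alpha = 0.68$ gives both sides approximately $2.99$, so the inequality holds, and monotonicity in $\lam$ extends it to all $\lam \geq 2$.

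The main obstacle will be the final verification at $\lam = 2$, since the polynomial inequality is essentially tight at the stated constant: the structural steps (the envelope argument that produces $g$, its monotonicity, and the reduction to $\lam = 2$) are immediate, but the numerical slack at the critical point is only about $10^{-3}$. To keep the argument clean I would either square the inequality twice and apply the Taylor bound $\sqrt{1-x} \geq 1 - x/2 - x^2/2$ to extract $\alpha$ as the root of the resulting quadratic-in-$\alpha^2$, or simply confirm the inequality by an explicit numerical check at $\lam = 2$, $\alpha = 0.68$.
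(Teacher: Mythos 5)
Your proof is correct and follows essentially the same route as the paper: you arrive at the identical endpoint condition $\sqrt{5 + 4\sqrt{1 - \alpha^2/(4\lam^2)}} \geq 2\s + \alpha$ (the paper writes this as $d_{i+1} = \|q_a - \p{t_i}\| - 2\s d_i \geq \alpha d_i$, which is exactly your $g(2d_i) \geq \alpha d_i$), observe the same monotonicity that makes $\lam = 2$, $\s = 2/\sqrt{3}$ the binding case, and land on the same constant $\alpha \leq 0.68$. The only differences are cosmetic: your explicit envelope function $g$ and its monotonicity make the reduction to the phase endpoint cleaner than the paper's informal treatment (though note $g(2d_i) \geq \alpha d_i$ is only a sufficient condition, not ``if and only if,'' since $g$ is merely a lower bound on the true distance), and you verify the inequality numerically at the tight point rather than following the paper's route of applying $\sqrt{1-x} \leq 1 - x/2$ and solving the resulting quadratic in $\alpha$ — both close the argument equally well.
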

\begin{proof}
  As in Lemma~\ref{lem:alphaNonSpeed} we need only show that $\D{t}
  \geq \alpha d_i$, and again let $d_{i+1}$ be the distance between
  $\p{}$ and $\q{}$ if both moved toward $q_a$ for the duration of
  phase $i$. Note that $d_{i+1}$ is found as the length of the segment
  $\p{t_i}q_a$ as in Lemma~\ref{lem:alphaNonSpeed} except now minus
  $2\s d_i$ instead of $2d_i$. Thus, we have:

  \begin{equation*}
    \begin{split}
      d_{i+1} & = \sqrt{(\sqrt{4d_i^2 - \frac{d_i^2\alpha^2}{\lam^2}} +
           d_i)^2 + \frac{d_i^2\alpha^2}{\lam^2}} - 2\s d_i\\
              & = \sqrt{5d_i^2 - \frac{d_i^2\alpha^2}{\lam^2} +
                2d_i\sqrt{4d_i^2 - \frac{d_i^2\alpha^2}{\lam^2}} +
                \frac{d_i^2\alpha^2}{\lam^2}} - 2\s d_i\\
              & =  d_i\sqrt{5 + 4\sqrt{1 -
                  \frac{\alpha^2}{4\lam^2}}}- 2\s d_i
    \end{split}
  \end{equation*}



  In order to satisfy the {\feasability}, we must choose an $\alpha$
  such that the following inequality holds:

  \begin{equation*}
    \begin{split}
      \alpha d_i & \leq d_i\sqrt{5 + 4\sqrt{1 -
          \frac{\alpha^2}{4\lam^2}}} -2\s d_i\\
    \alpha^2 + 4\alpha\s + 4\s^2 & \leq 5 + 4\sqrt{1 -\frac{\alpha^2}{4\lam^2}}\\
    \alpha^2 + 4\alpha \s + 4\s^2 & \leq 5 + 4 - \frac{\alpha^2}{2\lam^2}\\
    \alpha^2(1+\epsilon^2/2) + 4\alpha \s + 4\s^2 -9 &\leq 0
    \end{split}
  \end{equation*}

  This gives the following upper bound when $\lam$ is minimum and $\s$
  is maximum, which by assumption is $2$ and $1/\sqrt{1-(1/2^2)}$,
  respectively.

 \begin{equation*}
   \alpha \leq \frac{-4\s + \sqrt{16\s^2 -
       4(1+1/2\lam^2)(4\s^2-9)}}{2(1+1/2\lam^2)} \leq 0.68
 \end{equation*}

\end{proof}

We can now prove a lower bound on the increase in the distance during
the $i$th phase.

\begin{lemma}
  If $\lam \geq 2$, $\alpha \leq 0.68$, and $\s \:\leq\:
  (1-1/\lam^2)^{-1/2}$, then at the start of the $i+1$ phase, we have
  $d_{i+1} \:\geq\: d_i\sqrt{(2\s-3)^2 + \alpha^2(\s - 1/2)/\lam^2}$,
  where $\alpha = 0.68$ is an absolute constant.
  \label{lem:lowerRoundChangeSpeed}
\end{lemma}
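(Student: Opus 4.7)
The plan is to mimic the proof of Lemma~\ref{lem:lowerRoundChange} with the sole substantive change being that the tracker travels distance $2\s d_i$ (instead of $2d_i$) during the phase's $2d_i$ time units. As before, I would assume without loss of generality that the target ends the $i$th phase at $q_a$, so the tracker is on or below the line $\q{t_i}q_c$; the minimum value of $d_{i+1}$ is again realized when the tracker moves directly along this line toward $q_c$. Note that $\lam \geq 2$ implies $\s \leq 2/\sqrt{3} < 3/2$, so the tracker does not overshoot $q_c$, and the same right triangle $q_a\,\p{t_{i+1}}\,q_c$ used in Lemma~\ref{lem:lowerRoundChange} still applies, now with legs $q_a q_c = \alpha d_i/\lam$ and $\p{t_{i+1}} q_c = d_i + \sqrt{4d_i^2 - \alpha^2 d_i^2/\lam^2} - 2\s d_i$.

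Squaring this expression and collecting terms, exactly as in the earlier lemma, should yield
\[
\frac{d_{i+1}^2}{d_i^2} \;=\; 5 + 4\s^2 - 4\s + (2 - 4\s)\sqrt{4 - \frac{\alpha^2}{\lam^2}}.
\]
The claimed bound $(2\s - 3)^2 + \alpha^2(\s - 1/2)/\lam^2$ expands to $9 - 12\s + 4\s^2 + \alpha^2(\s-1/2)/\lam^2$, so it suffices to show that the cross term satisfies $(2-4\s)\sqrt{4-\alpha^2/\lam^2} \geq (4 - 8\s) + \alpha^2(\s-1/2)/\lam^2$.

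The main (and really only) subtlety is getting the direction of the Taylor estimate right. Because $\s > 1/2$, the coefficient $(2 - 4\s)$ is negative, so lower-bounding the product requires an \emph{upper} bound on $\sqrt{4 - \alpha^2/\lam^2}$. The concavity inequality $\sqrt{1-x} \leq 1 - x/2$ (already invoked in Theorem~\ref{thm:unobstructed-upper}) gives $\sqrt{4 - \alpha^2/\lam^2} = 2\sqrt{1 - \alpha^2/(4\lam^2)} \leq 2 - \alpha^2/(4\lam^2)$. Multiplying by the negative factor $(2-4\s)$ flips the inequality to
\[
(2-4\s)\sqrt{4-\alpha^2/\lam^2} \;\geq\; (2-4\s)\bigl(2 - \alpha^2/(4\lam^2)\bigr) \;=\; (4 - 8\s) + \alpha^2(\s - 1/2)/\lam^2.
\]
Plugging this back into the expression for $d_{i+1}^2/d_i^2$ and combining with the remaining $5 + 4\s^2 - 4\s$ yields precisely $(2\s-3)^2 + \alpha^2(\s - 1/2)/\lam^2$, completing the proof. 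The rest is mechanical algebra paralleling the slower-tracker case; tracking the sign-flip is the only pitfall worth highlighting.
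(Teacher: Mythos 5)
Your proposal is correct and follows essentially the same route as the paper's own proof: same right triangle $q_a\,\p{t_{i+1}}\,q_c$, same expansion to $5 + 4\s^2 - 4\s - 2(2\s-1)\sqrt{4 - \alpha^2/\lam^2}$, and the same use of $\sqrt{1-x}\le 1 - x/2$ (with the sign flip you highlight being exactly the step the paper also relies on, just written as $-2(2\s-1)\sqrt{\cdots}\ge -4(2\s-1)(1-\alpha^2/8\lam^2)$). Your explicit check that $2\s d_i$ does not overshoot $q_c$ is a small extra bit of care the paper leaves implicit, but it does not change the argument.
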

\begin{proof}
  Suppose, without loss of generality, that the target is at $q_a$ at
  the termination of the $i$th phase, which means the tracker is below
  the line $\q{t_i}q_c$. By the unit speed assumption, the target
  needs exactly $2d_i$ time for this move. The minimum value of
  $d_{i+1}$ is at least as large as if $\p{}$ had moved directly to
  $q_c$ by distance $2\s d_i$, as shown in
  Figure~\ref{fig:unobstructed-lower-geom}. We can calculate $d_{i+1}$
  from the right triangle $q_a \p{t_{i+1}} q_c$, as follows:

  \begin{equation*}
    \begin{split}
      d_{i+1} & \geq \sqrt{(\sqrt{4d_i^2 - \frac{\alpha^2d_i^2}{\lam^2}} -
        (2\s d_i-d_i))^2+\frac{\alpha^2d_i^2}{\lam^2}}\\
             &  =  d_i\sqrt{5 + 4\s^2 - 4\s - 2(2\s - 1)\sqrt{4 -\frac{\alpha^2}{\lam^2}}}\\
             & \geq  d_i\sqrt{5 + 4\s^2 - 4\s - 4(2\s - 1)(1 - \alpha^2/8\lam^2)}\\
             & =  d_i\sqrt{4\s^2 - 12\s + 9 + \s\alpha^2/\lam^2 - \alpha^2/2\lam^2}\\
             & = d_i\sqrt{(2\s -3)^2 + \alpha^2(\s - 1/2)/\lam^2}
    \end{split}
  \end{equation*}

\end{proof}

\paragraph{Remark.}
The preceding lemma can be used to calculate the maximum tracker speed
for which the target can still force a non-negative distance for a
specific $\lam$ as follows:

\begin{equation*}
\begin{split}
\sqrt{(2\s -3)^2 + \alpha^2(\s - 1/2)/\lam^2} &= 1\\
4\s^2 - 12\s +\frac{\alpha^2\s}{\lam^2} & =
-8 + \frac{\alpha^2}{2\lam^2}\\
2\s - \left(\frac{12 - (\alpha/\lam)^2}{4}\right)^2 - \left(\frac{12 -
  \alpha^2/\lam^2}{4}\right)^2 & = -8 + \frac{\alpha^2}{2\lam^2}\\
2\s - \frac{12 - \alpha^2/\lam^2}{4} &= -\sqrt{-8 +
  \frac{\alpha^2}{2\lam^2}+ (\frac{12 - \alpha^2/\lam^2}{4})^2}\\
 \s & = \frac{-\sqrt{-8 + \frac{\alpha^2}{2\lam^2} +
    (\frac{12-\alpha^2/\lam^2}{4})^2} +
  \frac{12-\alpha^2/\lam^2}{4}}{2}
\end{split}
\end{equation*}

As $\lam$ gets large, the upper and lower bound are within a constant
factor of each other. Indeed, with a more careful choice of $\alpha$, 
we can show that the upper and lower bounds are within a factor of
$5.32$ (as opposed to $10.23$ for the above simple analysis) of each
other for $\lam \geq 2$, but we omit those details from this abstract.

\cut{
\begin{equation}
   \sqrt{(2\s-3)^2 + \alpha^2(\s - 1/2)/\lam^2} = 1
\label{eqn:speedDistanceGrowth}
\end{equation}

\begin{equation}
\s = \frac{-\sqrt{-9 + \frac{\alpha^2}{2\lam^2} +
    (\frac{12-\alpha^2}{4\lam^2})^2} + \frac{12-\alpha^2}{4\lam^2}}{2}
\label{eqn:speedLowerBound}
\end{equation}

\begin{figure}[ht]
\begin{center}
\input{graphs/epsSpeedConstantDist.tex}
\caption{On the x-axis the error rate $\lam$ and on the y-axis the
  speed $\s$. The upper bound denotes the speed the tracker needs to
  guarantee $\D{t} \leq \D{0}$, the lower bound is the maximum speed
  the target can guarantee $\D{t} \geq \D{0}$ against, and the
  improved lower bound is the more detailed analysis of the same
  bound.}
\label{fig:epsSpeedConstantDist}
\end{center}
\end{figure}

We can improve the bound with a more careful choice of $\alpha$.
Lemma~\ref{lem:alphaSpeed} chooses an $\alpha$ that would
suffice for the maximum speed $\frac{1}{\sqrt{1-1/\lam^2}}$. However,
the $\s$ found by Equation~\ref{eqn:speedLowerBound} will not be this
large, and thus a larger $\alpha$ could have been chosen. Instead, if
we substitute $\alpha$ from Equation~\ref{eqn:alphaSpeed} into
Equation~\ref{eqn:speedDistanceGrowth} and solve for $\s$ we can
obtain a better bound. This is because we use the precise value of
$\alpha$ needed to compensate for the speed increase, rather than the
maximum $\s$ found by our upper bound.  While we cannot simplify the
resulting equation to a closed form for $\s$, using Mathematica we can
find the resulting value of $\s$ for any $\lam$, thus we include this
better bound in resulting in Figure~\ref{fig:epsSpeedConstantDist}. To
give some context, at an error rate of $5\%$ ($\lam = 20$), $\p{}$ can
maintain constant distance with speed a speed $1.001$, and $\q{}$ can
keep $\p{}$ from getting any closer that has a speed advantage up to
$1.0003$, a factor $3.33$ difference.
}

\section{Tracking in the Presence of Obstacles}
\label{sec:obstructed}

The presence of obstacles makes the tracking problem considerably harder under 
the localization noise. The following simple example (Fig.~\ref{fig:obstructed-lower})
shows that the target can grow its distance from the tracker as 
$\D{t} \:\geq\: \D{0} + t$, for any finite value of $\lam$. The obstacle 
consists of a single $U$-shaped non-convex polygon. Initially, the
target is at distance $\D{0}$ from the tracker, and the ``width'' of
the obstacle is less than $\D{0}/2 \lam$, so that the localization
error is unable to distinguish between a target moving inside the $U$
channel, or around its outer boundary. One can show that no matter
how the tracker pursues, its distance from the target can grow linearly
with time.

\medskip

\begin{figure}[ht]
\begin{center}
\includegraphics[height=2cm]{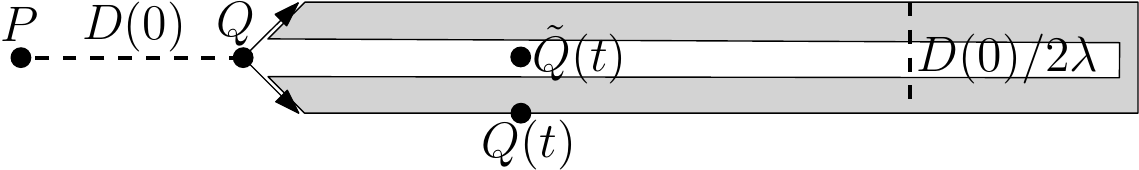}
\caption{Impossibility of tracking among obstacles.}
\label{fig:obstructed-lower}
\end{center}
\end{figure}

\paragraph{Path Proportionate Error.}
In order to get around this impossibility of tracking, we propose a
\emph{path proportionate error} measure, where the localization error
is proportional to the \emph{shortest path distance} between the
target and the tracker, and not the Euclidean distance as used
before. That is, the tracking signal and the physical movement of
the agents follow the same path metric.  Formally, the
localization error at time $t$ always obeys the following bound:

\begin{equation*}
\dist{\q{t}}{\qa{t}} \:\leq\: \frac{\dist{\p{t}}{\q{t}}}{\lam}
\end{equation*}


We show that the best tracking performance in this model is $\D{t} \:=\: \D{0} + \Theta(t/\lam)$; 
that is the distance grows linearly with $1/\lam$, as opposed to the inverse quadratic function for the 
unobstructed case.

\subsection{Tracking Upper Bound}

The tracker's strategy in this case is also greedy, except now the tracker makes 
short-term commitments in \emph{phases}, instead of continuously changing its path 
towards the new observed location. In particular, for each phase, the tracker fixes 
its goal as the \emph{observed position of the target at the start of the phase},
moves along the shortest path to this goal, and then begins the next phase.

\begin{algorithm}[\discGreedy]
The initial phase begins at time $t=0$. During the $i$th phase, which begins at time $t_i$, 
the tracker moves along the shortest path to the observed location of the target at $t_i$, namely, 
$\qa{t_i}$. When tracker reaches $\qa{t_i}$, the $i$th phase ends, and the next phase begins.
\end{algorithm}

The upper bound on the tracking performance is given by the following theorem.

\begin{theorem}
Using {\discGreedy}, the tracker can ensure that $\D{t} \:\leq\: \D{0} + O(t/\lam)$.
\end{theorem}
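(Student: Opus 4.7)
The plan is to analyze {\discGreedy} on a per-phase basis, showing that in each phase the geodesic tracker--target distance grows at an average rate of $O(1/\lam)$ over the phase's duration, and then accumulate this bound over all phases contained in $[0,t]$. Since the relevant metric in the obstructed setting is the geodesic distance $\dist{\cdot}{\cdot}$, which satisfies the triangle inequality, all triangle-inequality arguments below should be interpreted in this metric.

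Let $d_i = \D{t_i}$ denote the tracker--target distance at the start of phase $i$, and let $\tau_i = t_{i+1} - t_i$ be its duration. Because during phase $i$ the tracker walks the shortest path from $\p{t_i}$ to $\qa{t_i}$ at unit speed, we have $\tau_i = \dist{\p{t_i}}{\qa{t_i}}$. Combining the triangle inequality with the path-proportionate error bound $\dist{\q{t_i}}{\qa{t_i}} \leq d_i/\lam$, I would first derive the sandwich
\[
  d_i(1 - 1/\lam) \:\leq\: \tau_i \:\leq\: d_i(1 + 1/\lam).
\]

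To bound $d_{i+1}$, I would use that at time $t_{i+1}$ the tracker sits at $\qa{t_i}$ while the target has traced a path of length at most $\tau_i$. Another application of the geodesic triangle inequality yields
\[
  d_{i+1} \:\leq\: \dist{\qa{t_i}}{\q{t_i}} + \dist{\q{t_i}}{\q{t_{i+1}}} \:\leq\: d_i/\lam + \tau_i.
\]
Subtracting $d_i$ and using the upper bound on $\tau_i$ gives $d_{i+1} - d_i \leq 2d_i/\lam$, while dividing by the lower bound on $\tau_i$ gives the per-unit-time growth
\[
  \frac{d_{i+1} - d_i}{\tau_i} \:\leq\: \frac{2d_i/\lam}{d_i(1-1/\lam)} \:=\: \frac{2}{\lam - 1} \:=\: O(1/\lam).
\]

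Telescoping this inequality across all phases completed by time $t$ yields $d_k \leq d_0 + (2/(\lam-1))\, t_k$ at each phase boundary $t_k \leq t$. For a time $t$ strictly inside phase $i$, the same triangle-inequality route through $\qa{t_i}$ gives
\[
  \D{t} \:\leq\: \dist{\p{t}}{\qa{t_i}} + d_i/\lam + (t - t_i) \:=\: (\tau_i - (t - t_i)) + d_i/\lam + (t - t_i) \:\leq\: d_i(1 + 2/\lam),
\]
so intra-phase distance stays within a $(1 + O(1/\lam))$ factor of the phase-boundary value, preserving $\D{t} \leq \D{0} + O(t/\lam)$. The main delicate point is the per-phase bound on $d_{i+1} - d_i$: it pulls together three separate ingredients, namely the error bound at the start of the phase, the triangle inequality applied in the geodesic metric at the end of the phase, and the lower bound $\tau_i \geq d_i(1-1/\lam)$ (which is what makes the ratio $(d_{i+1}-d_i)/\tau_i$ small and requires $\lam$ bounded away from $1$).
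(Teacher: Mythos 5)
Your proposal is correct and follows essentially the same phase-based, triangle-inequality argument as the paper: compute the duration $\tau_i$ of a {\discGreedy} phase, bound $d_{i+1}$ by routing through $\qa{t_i}$, and show the per-phase growth rate is $O(1/\lam)$. The only differences are cosmetic: the paper optimizes the ratio $(1/\lam + x)/(1+x)$ over a single parameter $x$ to get the slightly tighter constant $2/(\lam+1)$, whereas you bound numerator and denominator separately to get $2/(\lam-1)$, and you additionally spell out the intra-phase bound that the paper leaves implicit.
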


\begin{proof}
  First note that because $\dist{\qa{t_i}}{\q{t_i}} \leq \D{t_i}/\lam$,
  it follows that $t_{i+1}-t_i = \D{t_i}+x\D{t_i}$, where
  $\frac{-1}{\lam} \leq x \leq \frac{1}{\lam}$. Thus, the target's
  progress during the $i$th phase is upper bounded as
  $\dist{\q{t_i}}{\q{t_{i+1}}} \leq \D{t_i} + x\D{t_i}$.
Next, by applying the triangle inequality, the distance between $\p{}$ and $\q{}$ at the beginning of phase $t_{i+1}$ is upper bounded as

  \begin{equation*}
    \begin{split}
      \dist{\p{t_{i+1}}}{\q{t_{i+1}}} & = \dist{\qa{t_i}}{\q{t_{i+1}}}\\
                                      & \leq \dist{\qa{t_i}}{\q{t_i}} +
                                        \dist{\q{t_i}}{\q{t_{i+1}}}\\ 
                                      & \leq \frac{\D{t_i}}{\lam} + \D{t_i} + x\D{t_i}
    \end{split}
  \end{equation*}

  Finally, the upper bound on the rate of distance increase can be
  derived as follows:

  \begin{equation*}
    \begin{split}
      \frac{\dist{\p{t_{i+1}}}{\q{t_{i+1}}} -
          \dist{\p{t_i}}{\q{t_i}}}{t_{i+1} - t_i} & \: \leq \:
        \frac{\D{t_i}  + \D{t_i}/\lam + x\D{t_i}
        - \D{t_i}}{\D{t_i} + x\D{t_i}} \\
      & = \frac{1/\lam + x}{1+x}  \:\leq\: \frac{2}{\lam+1}
    \end{split}
  \end{equation*}
  where the final inequality uses the fact that the minimum
  value occurs when $x=1/\lam$. In conclusion, during each phase the
  distance between the tracker and the target increases by at most a
  factor of $\frac{2}{\lam + 1}$, giving the bound $\D{t} \:\leq\:
  \D{0} + O(\frac{t}{\lam})$
\end{proof}

\subsection{Tracking Lower Bound.}

Our final result is to prove that the trackability achieved by
{\discGreedy} is essentially optimal. In particular, we construct an
environment with polygonal obstacles and a movement strategy for the
target that ensures $\D{t} \:\geq\: \D{0} + \Omega(t/\lam)$.  The
construction of the polygonal environment is somewhat complicated and
requires a carefully designed set of obstacles.  The main schema of
the construction is shown in Figure~\ref{fig:geodesicLowerGraph},
where each edge of the ``tree-like'' diagram corresponds to a
``channel'' bounded by obstacles, and each face corresponds to a
``gadget'' consisting of a group of carefully constructed obstacles,
with the outer face occupied entire by a single large obstacle.

\begin{figure}[htbp]
  \begin{center}
      \includegraphics[height=4cm]{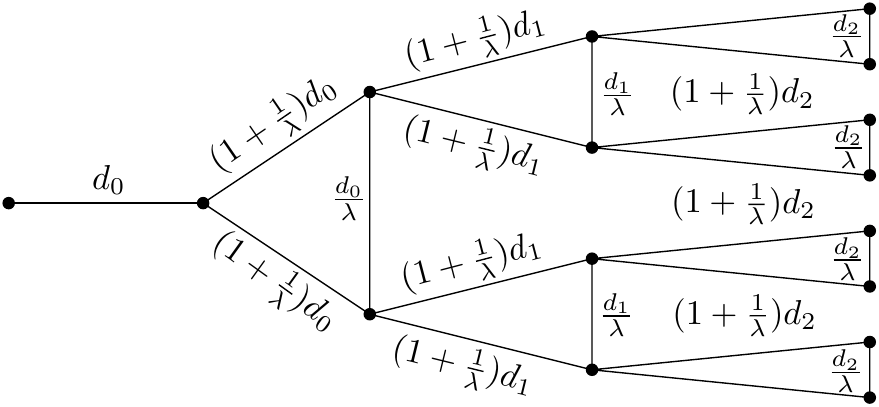}
  \end{center}
  \caption{A high level schema for the lower bound construction. The numbers
next to the edges denote the ``path length'' in the corresponding channels.}
  \label{fig:geodesicLowerGraph}
\end{figure}

As in the proof of Theorem~\ref{thm:unobstructed-lower}, the target
moves either to top or the bottom point of the gadget during a phase,
depending on the tracker's location.  The gadget construction is such
that the movement of the target along either path is indistinguishable
to the tracker because both paths are satisfied by a common set of
observed locations throughout the path. Thus, by invoking the earlier
equivalence principle, we may as well assume that the target knows the
tracker's choices.  If the target moves to the top, then the next
phase occurs in the top gadget, otherwise the bottom, and so on.

\cut{
We will show that the whether the
target takes the top or bottom path is indistinguishable to the
tracker. Therefore we assume the target can predict which path the
tracker takes, and thus choose whether to move to
the top or bottom accordingly. Again, this can be thought of as
demonstrating for any deterministic tracker strategy, there is a
corresponding target strategy in which $\p{}$ makes the wrong decision
in each phase, resulting in the maximum distance growth.
}

To realize the geometric scheme of
Figure~\ref{fig:geodesicLowerGraph}, we replace each edge of the graph
with a channel as shown in Figure~\ref{fig:edgeConstruction}.  The
desired edge length can be realized by adding any number of
arbitrarily skinny bends such that the length of the shortest path
through each channel equals the edge length.  Each face is replaced
with a set of obstacles, called a gadget, see
Figure~\ref{fig:triangleConstruction} for an abstract illustration.
The jagged line between each pair of nodes corresponds to a channel
such that shortest path through that channel has the given length. The
target will move along the shortest path through either the top or
bottom channel while reporting its location in the center
channel. Meanwhile, the channels connecting the top and bottom to the
center will guarantee that $\dist{\q{t}}{\qa{t}}\leq
\frac{1}{\lam}\dist{\q{t}}{\p{t}}$ at all times $t$ during a phase.

\begin{figure}[ht]
  \begin{center}
    \subfigure[] {
      \includegraphics[height=3.6cm]{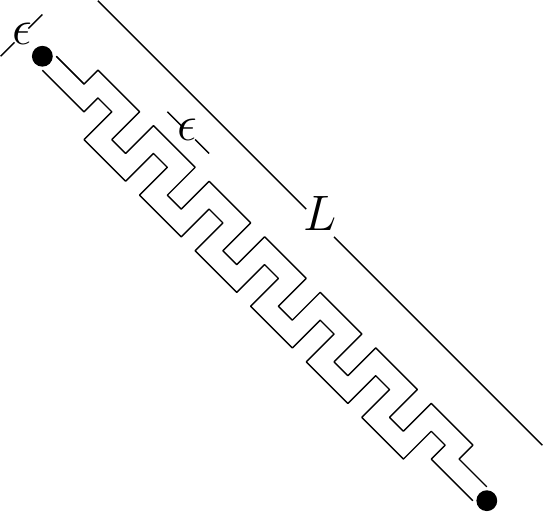}
      \label{fig:edgeConstruction}
    }
    \subfigure[] {
      \includegraphics[height=4cm]{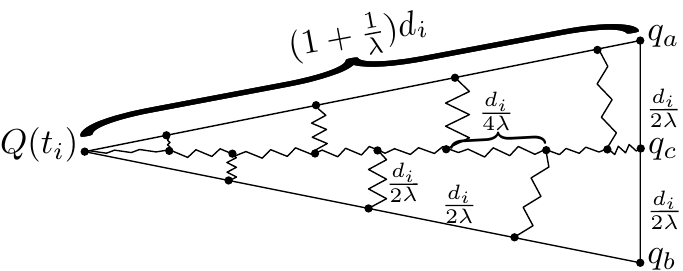}
      \label{fig:triangleConstruction}
    }
  \end{center}
  \caption{The channel construction in
    \subref{fig:edgeConstruction}. In
    \subref{fig:triangleConstruction} the shortest paths between nodes
    on the center path have length $\frac{d_i}{4\lam}$, and the
    remaining all have length $\frac{d_i}{2\lam}$.}
\end{figure}

\subsubsection{Gadget Construction and its Properties}

We now describe the construction of our gadgets and establish the
geometric properties needed for the correctness of our lower
bound. Each gadget is constructed out of two building blocks, the bent
channels seen in Figure~\ref{fig:edgeConstruction}, and intersections
depicted in Figure~\ref{fig:intersectionConstruction}. Each
intersection has the property that the shortest path between any two
of the points among $a,b$ and $c$ has length $2\delta$, where $\delta$
can be made arbitrarily close to $0$. Thus we can construct a channel
that branches into two channels such that the path length through the
intersection is the same regardless of the branch chosen. In
Figure~\ref{fig:gadgetConstruction}, we depict the construction of a
gadget using only intersections (triangles) and channels (jagged
lines).

\begin{figure}[ht]
  \begin{center}
    \subfigure[] {
      \includegraphics[height=3cm]{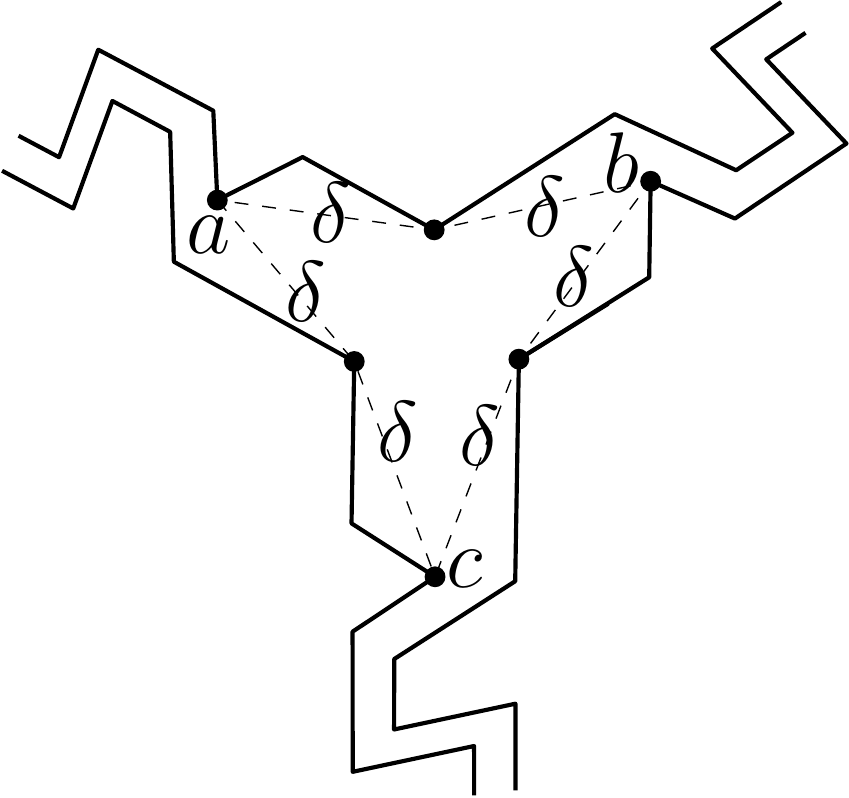}
      \label{fig:intersectionConstruction}
    }
    \subfigure[] {
      \includegraphics[height=3.5cm]{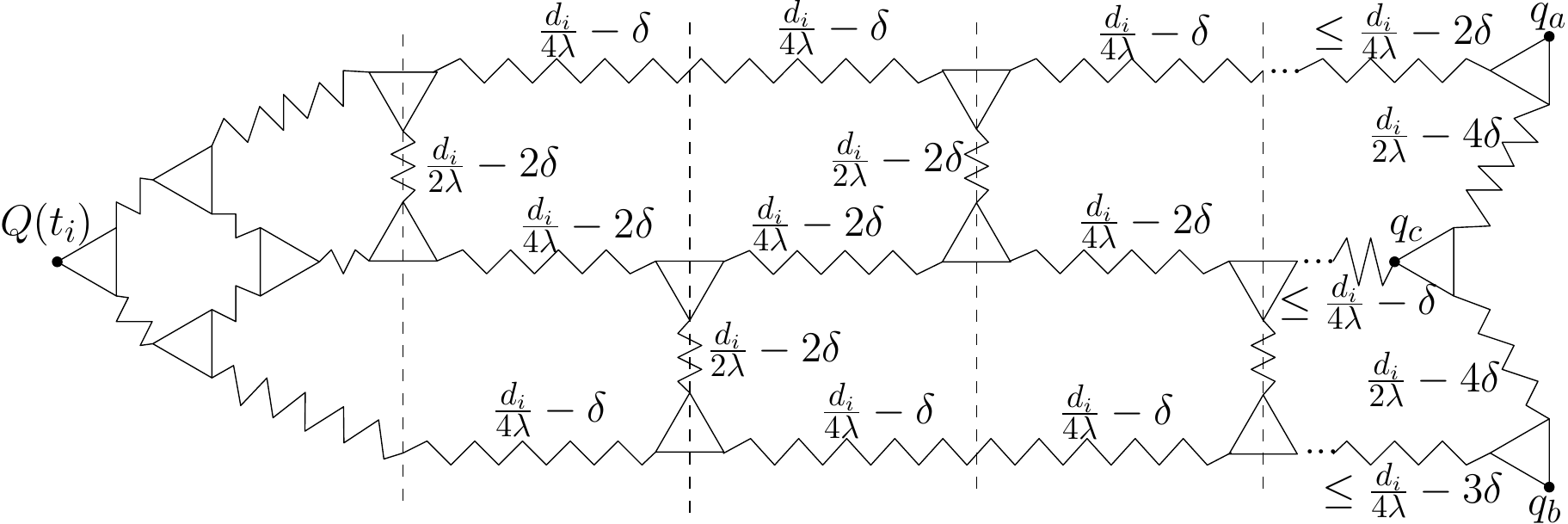}
      \label{fig:gadgetConstruction}
    }
  \end{center}
  \caption{In \subref{fig:intersectionConstruction} an example
    intersection such that the shortest path between any pair of $a$
    $b$ and $c$ has length $2\delta$. In
    \subref{fig:gadgetConstruction} an example gadget construction,
    where each triangle corresponds to an intersection with corners
    representing the points $a$ $b$ and $c$. The horizontal channels
    have length $\frac{d_i}{4\lam}$ between each pair of vertical
    dashed lines, except for the initial distance before the first
    line (which can be made arbitrarily small), and the remaining
    spillover distance after the last dashed line.}
\end{figure}

As in the lower bound for the unobstructed case, the target starts the phase at
$\q{t_i}$, and moves to $q_a$ or $q_b$ \emph{while the observed location of the
targets moves along the shortest path from $\q{t_i}$ to $q_c$}. In particular, let $\path{a}$,
$\path{c}$, and $\path{b}$ denote the shortest paths from $\q{t_i}$ to $q_a$, $q_c$ and 
$q_b$ respectively. The following lemma establishes several properties needed for the 
feasibility of the target's strategy.

\begin{lemma}
  We can construct a gadget for each phase $i$ such that 
(1) $\path{a}$, $\path{c}$ and $\path{b}$ have length
  $(1+\frac{1}{\lam})d_i$ and 
(2) for any point $x_c$ at distance $\ell$
  along $\path{c}$, the corresponding points $x_a$ and $x_b$ distance
  $\ell$ along $\path{a}$ and $\path{b}$, respectively, satisfy
  $\dist{x_c}{x_a} \leq \frac{d_i}{\lam}$ and $\dist{x_c}{x_b} \leq
  \frac{d_i}{\lam}$.
  \label{lem:gadgetProperties}
\end{lemma}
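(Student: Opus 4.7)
The plan is to realize the abstract picture of Figure~\ref{fig:triangleConstruction} by concrete obstacles, using the two primitive building blocks already introduced: bent channels (Figure~\ref{fig:edgeConstruction}), which realize an arbitrary prescribed geodesic length inside an arbitrarily small footprint, and three-way intersections (Figure~\ref{fig:intersectionConstruction}), whose internal pairwise shortest paths have length $2\delta$ for arbitrarily small $\delta>0$. Concretely, I would build three parallel horizontal channels, called \emph{top}, \emph{center}, and \emph{bottom}, linked by a ``ladder'' of vertical connector channels each of geodesic length $\frac{d_i}{2\lam}$, placed at horizontal spacing $\frac{d_i}{4\lam}$. The point $\q{t_i}$ sits at one end of the center and $q_c$ at the other; $q_a$ and $q_b$ sit at the corresponding ends of the top and bottom channels, so that $\path{a}$, $\path{c}$, $\path{b}$ are forced through their respective horizontal rows by the outer obstacles.

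To establish property~(1), I count segment lengths along each of $\path{a}$, $\path{c}$, $\path{b}$. The center path $\path{c}$ runs along the entire center channel, which is calibrated to length $(1+\frac{1}{\lam})d_i$. The top path $\path{a}$ ascends the first vertical connector (contributing $\frac{d_i}{2\lam}$), runs along the top channel, calibrated to horizontal length $d_i$, and descends the last vertical connector (another $\frac{d_i}{2\lam}$), giving $d_i+\frac{d_i}{\lam}=(1+\frac{1}{\lam})d_i$. The number of intersection primitives along each row and the initial/final horizontal slack are tuned to make the counts exact; any residual from a non-integer number of segments is absorbed into the $\delta$-slack of each intersection and the ``initial distance before the first line'' and ``spillover distance after the last dashed line'' noted in Figure~\ref{fig:gadgetConstruction}. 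The bottom path $\path{b}$ is symmetric.

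For property~(2), fix $\ell \in [0,(1+\frac{1}{\lam})d_i]$ and let $x_c, x_a$ be the points at path-distance $\ell$ along $\path{c}, \path{a}$. Parametrize horizontal position so that $\q{t_i}$ sits at $0$ and $q_c$ at $(1+\frac{1}{\lam})d_i$. In the ``middle'' range $\frac{d_i}{2\lam}\leq\ell\leq d_i+\frac{d_i}{2\lam}$, the point $x_c$ lies on the center at horizontal position $\ell$ and $x_a$ lies on the top at horizontal position $\ell-\frac{d_i}{2\lam}$; the horizontal offset is $\frac{d_i}{2\lam}$ and the vertical offset equals the connector height $\frac{d_i}{2\lam}$. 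Since any path from $x_c$ to $x_a$ must cross via some vertical connector at horizontal position $q$, its length equals $|\ell-q|+\frac{d_i}{2\lam}+|q-(\ell-\frac{d_i}{2\lam})|$; and because connectors exist at every multiple of $\frac{d_i}{4\lam}$, there is always a $q$ lying inside the interval $[\ell-\frac{d_i}{2\lam},\ell]$, for which this expression collapses to exactly $\frac{d_i}{\lam}$ independent of $q$. In the two short end-ranges where $x_a$ is still on the first ascending connector or already on the last descending connector, the analogous computation (go along the center to the foot of the relevant connector, then up) gives at most $\frac{d_i}{\lam}$ as well. The bound for $x_b$ is identical by symmetry.

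The main obstacle is the bookkeeping described above: tuning the initial and final horizontal slack, and the number of intersections along each row, so that all three path lengths come out exactly $(1+\frac{1}{\lam})d_i$ for an arbitrary real value of $\lam$. This is always achievable because both $\delta$ and the slack distances can be made arbitrarily small, absorbing any rounding discrepancy.
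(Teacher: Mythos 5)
Your proposal is correct and takes essentially the same approach as the paper: realize the gadget as a ladder of horizontal rows linked by connectors of geodesic length $\frac{d_i}{2\lam}$ at horizontal spacing $\frac{d_i}{4\lam}$, and bound $\dist{x_c}{x_a}$ by routing through the nearest connector (the paper does this at the worst-case midpoint; you do it uniformly over $\ell$). One small inconsistency to tidy up: you first place $q_a$ at the far end of the \emph{top} channel, but your length bookkeeping for property~(1) has $\path{a}$ descend a final connector back to the center level, which puts $q_a$ elsewhere; the paper's gadget keeps $q_a$, $q_b$, $q_c$ as three distinct exit ports (feeding distinct subsequent gadgets), so you should pick the first description and calibrate the top channel to length $d_i + \frac{d_i}{2\lam}$ rather than $d_i$.
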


\begin{proof}
By construction, the shortest path in each channel between the dashed lines in
  Figure~\ref{fig:gadgetConstruction} has length $\frac{d_i}{4\lam}$, and
therefore this construction can be extended until $\path{a}$, $\path{c}$ and
  $\path{b}$ have length exactly $(1+\frac{1}{\lam})d_i$.
  Next, by the symmetry of the construction, we need only show that
  $\dist{x_c}{x_a} \leq d_i /\lam$. We ignore the case where $x_c$ lies in
  the channels before the first dashed lines, as the length of such
  channels can be made arbitrarily small to guarantee that
  $\dist{x_a}{x_c} \leq d_i/\lam$. The maximum distance between $x_a$
  and $x_c$ then occurs when $x_a$ lies at the midpoint between two
  intersections in the top channel. However, in this case one can
  easily verify that the following holds:
  \begin{equation*}
    \dist{x_c}{x_a} = \delta + \frac{d_i}{4\lam}-2\delta + 2\delta +
    \frac{d_i}{2\lam} - 2\delta + 2\delta + \frac{d_i}{4\lam}-\delta = \frac{d_i}{\lam}
\end{equation*}
This completes the proof.
\end{proof}

\subsubsection{Gap Invariant and the Proof of the Lower Bound}

We now formulate the invariant maintained by the target so that its motion is 
valid under our (path proportionate) localization error and achieves the 
desired lower bound.

\paragraph{\spFeasability .} 
Throughout the $i$th phase, the target moves along a path $\q{t}$ such that
$\D{t} \geq d_i$ for all times $t$, and all reported locations
satisfy $\dist{\q{t}}{\qa{t}} \leq \frac{d_i}{\lam}$.

\begin{lemma}
  For the duration of phase $i$, {\spFeasability} is maintained.
\end{lemma}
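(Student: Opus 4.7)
The plan is to verify the two clauses of the \spFeasability{} separately for times $t$ throughout phase $i$: first the reported-location bound $\dist{\q{t}}{\qa{t}} \leq d_i/\lam$, then the geodesic-distance bound $\D{t} \geq d_i$.

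The first clause follows almost immediately from Lemma~\ref{lem:gadgetProperties}. By the target's strategy, at time $t$ the true position $\q{t}$ lies at arc-length $\ell = t - t_i$ along either $\path{a}$ or $\path{b}$ (depending on the branch selected at the start of the phase), and the reported position $\qa{t}$ is the corresponding point at arc-length $\ell$ along $\path{c}$. The lemma guarantees that each such pair of corresponding points is within distance $d_i/\lam$. The boundary case of the initial short segment (before the first dashed line of Figure~\ref{fig:gadgetConstruction}) is handled by shrinking that segment to arbitrarily small length, as also noted in Lemma~\ref{lem:gadgetProperties}.

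For the geodesic-distance clause, I would build the argument around a bottleneck property of the gadget: the phase-$i$ region is reachable from $\p{t_i}$ only through the entry point $\q{t_i}$. Granting this property, for every admissible choice $x \in \{x_a, x_b\}$ of the target's location at time $t$, one gets $\dist{\p{t_i}}{x} \geq \dist{\p{t_i}}{\q{t_i}} + \dist{\q{t_i}}{x} = d_i + \ell$. Combined with the unit-speed bound $\dist{\p{t_i}}{\p{t}} \leq t - t_i = \ell$ and the reverse triangle inequality, this gives $\dist{\p{t}}{x} \geq d_i$. Applying the equivalence principle---both target choices produce the identical observed trajectory along $\path{c}$, so no deterministic tracker can distinguish them---the target may be assumed to commit to whichever branch maximizes the distance, yielding $\D{t} \geq \max(\dist{\p{t}}{x_a}, \dist{\p{t}}{x_b}) \geq d_i$. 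The base case $\D{t_i} \geq d_i$ holds by the inductive definition of $d_i$ from the previous phase.

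The main obstacle is justifying the bottleneck property. I would establish it from the specific channel-and-intersection layout of Figures~\ref{fig:edgeConstruction}--\ref{fig:gadgetConstruction} together with the enclosing outer obstacle depicted in Figure~\ref{fig:geodesicLowerGraph}: by the recursive tree placement of gadgets, the entire phase-$i$ region is surrounded by obstacle boundary except at the channel connecting it to $\q{t_i}$, and each internal channel realizes its designated edge length exactly, so any attempted shortcut into the interior must cross obstacle walls and therefore has no shorter route than entering through $\q{t_i}$. Once this geometric/topological step is verified, the triangle-inequality bookkeeping described above completes the proof.
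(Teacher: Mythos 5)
Your proof is correct and follows essentially the same route as the paper's: the reported-location clause falls directly out of Lemma~\ref{lem:gadgetProperties} by tracking the corresponding arc-length points on $\path{a}$ (or $\path{b}$) and $\path{c}$, exactly as the paper does, and the distance clause $\D{t}\geq d_i$ rests on the fact that $\path{a},\path{b}$ are geodesics that $\p{}$ cannot shortcut through the obstacle structure. The one thing you do beyond the paper is unpack its terse parenthetical ``(and this cannot be shortcut by $\p{}$)'' into an explicit bottleneck claim (every path from $\p{t_i}$ into the phase-$i$ gadget must pass through $\q{t_i}$) followed by the reverse-triangle-inequality bookkeeping $\dist{\p{t}}{x}\geq(d_i+\ell)-\ell=d_i$; this is a faithful and slightly more rigorous rendering of the same idea rather than a different argument.
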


\begin{proof}
  Whether $\q{}$ moves along $\path{a}$ or $\path{b}$, they are both
  shortest paths (and this cannot be shortcut by $\p{}$), implying
  that $\D{t} \geq d_i$ for the duration of the phase. Without loss of
  generality, suppose $\q{}$ chooses $\path{a}$.  Then, after time
  $t$, both the target and its observed position have moved a distance
  of $t$ along $\path{a}$ and $\path{c}$, respectively. Therefore, by
  Lemma~\ref{lem:gadgetProperties}, we have $\dist{\q{t}}{\qa{t}} \leq
  \frac{d_i}{\lam}$.
\end{proof}

We can prove our lower bound.

\begin{theorem}
The target's strategy guarantees that after each phase ending at time $t$, the distance 
function satisfies $\D{t} \:\geq\: \D{0} + \Omega(\frac{t}{\lam})$.
\end{theorem}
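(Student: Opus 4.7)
The plan is to mirror the phase-based strategy of Theorem~\ref{thm:unobstructed-lower}, with the gadget properties from Lemma~\ref{lem:gadgetProperties} and the {\spFeasability} replacing the Euclidean geometry arguments used there. Let $d_i$ denote the distance at the start of phase $i$.

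First I would account for the time budget. Since the target moves at unit speed along $\path{a}$ or $\path{b}$, both of length $(1+\frac{1}{\lam})d_i$, phase $i$ lasts exactly $(1+\frac{1}{\lam})d_i$ time units. The tracker starts in the channel preceding the current gadget at geodesic distance $d_i$ from the entry point $\q{t_i}$, so by the end of the phase it has penetrated the gadget by at most $d_i/\lam$ units along any combination of its internal channels.

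Second, I would lower-bound $d_{i+1}$ via a minimax argument on $\p{t_{i+1}}$. Since the non-deterministic adversary picks $q_a$ or $q_b$ after seeing the tracker's final location, the tracker seeks to minimize $\max(\dist{\p{t_{i+1}}}{q_a},\dist{\p{t_{i+1}}}{q_b})$. A short case analysis on whether the tracker ends on $\path{a}$, $\path{b}$, $\path{c}$, or mid-traversal across a vertical connector shows that the tracker's best choice is to remain on $\path{c}$ at maximum depth $d_i/\lam$. Even then, reaching either $q_a$ or $q_b$ requires one vertical crossing of length $d_i/(2\lam)$ plus the residual path length $(1+\frac{1}{\lam})d_i - d_i/\lam = d_i$, giving $d_{i+1} \geq (1+\frac{1}{2\lam})d_i$. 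Every other terminal position is strictly worse for the tracker, because any complete crossover between $\path{a}$ and $\path{b}$ through the gadget costs at least $d_i/\lam$.

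Finally I would telescope. Summing the per-phase gains over the first $k$ phases yields
\[
\D{t_k} - \D{0} \:=\: \sum_{i=0}^{k-1}(d_{i+1} - d_i) \:\geq\: \frac{1}{2\lam}\sum_{i=0}^{k-1} d_i, \qquad t_k \:=\: \sum_{i=0}^{k-1}\Bigl(1+\frac{1}{\lam}\Bigr) d_i,
\]
so dividing gives $\D{t_k} - \D{0} \geq \frac{t_k}{2(\lam+1)} = \Omega(t_k/\lam)$, as required.

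The main obstacle is the case analysis in the second step: I must verify that no terminal position in the gadget's interior, including trajectories that split the budget between a horizontal channel and a vertical connector, simultaneously attains geodesic distance below $(1+\frac{1}{2\lam})d_i$ from \emph{both} $q_a$ and $q_b$. This relies on careful accounting of the channel lengths ($d_i/(4\lam)$ along $\path{c}$ and $d_i/(2\lam)$ elsewhere, per Figure~\ref{fig:triangleConstruction}) and on the fact that the outer face is a single obstacle, which forbids any external shortcut between $q_a$ and $q_b$.
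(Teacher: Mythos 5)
Your proof is correct and follows essentially the same approach as the paper: bound the tracker's penetration into gadget $i$ by $d_i/\lam$, show the best terminal position for the tracker is on $\path{c}$ so that reaching the target's endpoint forces an extra $d_i/(2\lam)$ detour giving $d_{i+1}\geq(1+\frac{1}{2\lam})d_i$, and amortize over the phase length $(1+\frac{1}{\lam})d_i$ to obtain the $\Omega(t/\lam)$ rate. The paper packages this as an induction with an explicit split into the cases where the tracker has or has not yet entered the gadget, whereas you present it as a minimax over terminal positions plus a telescoping sum, but the underlying argument and constants are identical, and the residual case analysis you flag as still needing verification is left at the same level of detail in the paper itself.
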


\begin{proof}
The proof is by induction on the phase $i$. The basis of the induction is $i=0$.
Since the localization error makes target's top and bottom paths indistinguishable 
to the tracker, the target can ensure that at the end of phase $0$ the target is on
the side of $\path{c}$ that is opposite $\p{}$. Without loss of generality, suppose that
that target has reached $q_a$. Then the best case for $\p{}$ is if it moved 
$\frac{d_0}{\lam}$ along $\path{c}$, which achieves $\D{t_1} \geq \D{0} + \frac{\D{0}}{2\lam}$.

Now assume by induction that after phase $i-1$ ends at time $t_i$, we
have $\D{t_i} \geq \D{t_{i-1}} + \D{t_{i-1}}/2\lam = d_i$. Suppose now
that $\p{}$ has yet to reach the gadget corresponding to phase $i$
when $\q{}$ has finished phase $i$ at time $t_{i+1}$. Then necessarily
$\D{t_{i+1}} \geq d_i + d_i/\lam$, as that is the length $\path{a}$
and $\path{b}$. Otherwise if $\p{}$ has moved into the gadget, then
the inequality $\D{t_i} \geq d_i$ ensures that the closest the target
can be to the tracker is if $\p{}$ has moved $\frac{d_i}{\lam}$ along
$\path{c}$, which implies $\D{t_{i+1}} \geq \D{t_{i}} +
\frac{\D{t_i}}{2\lam}$.

Thus, in a round with duration $(1+\frac{1}{\lam})d_i$, the distance increases by at least 
$d_i /2\lam$. Thus, in the $i$th phase, the distance increases by a factor of at least

  \begin{equation*}
      \frac{d_i /2\lam}{(1+\frac{1}{\lam})d_i} =
         \frac{1}{2\lam(1+\frac{1}{\lam})}   = \frac{1}{2+2\lam}
  \end{equation*}
  
Thus, at the end of any phase, we have the inequality $\D{t} \:\geq\: \D{0} + \Omega(t/\lam)$,
which completes the lower bound.
\end{proof}

\section{Extension to $d$ dimensions}
\label{sec:dimension}

Our analysis of trackability was carried out for $2$-dimensional Euclidean
plane, but the results generalize easily to $d$ dimensions. Indeed, in the 
unobstructed case, our analysis of the upper bound only makes use of
the triangle inequality: the region of interest is the triangle formed by 
$\p{t}$, $\q{t}$, and $\qa{t}$, and the target $\q{}$ moves directly away 
from $\p{}$. Thus, within an arbitrarily small time interval $\Delta t$,
$\p{}$ and $\q{}$ are moving within the two-dimensional plane of
the triangle $\p{t}\q{t}\qa{t}$. The upper bound analysis therefore extend 
to any dimension $d \geq 2$. The same reasoning also holds in the presence of 
obstacles. Finally, the lower bound construction of $d=2$ immediately implies 
that the trackability lower bound holds in all dimensions $d \geq 2$.

\section{Conclusion}
\label{sec:conclusion}

Our paper is an attempt to formally study the worst-case impact of noisy 
localization on the performance of tracking systems. We analyzed a simple,
but fundamental, problem where a tracker wants to pursue a target but
the tracker's location sensor can measure the target's position only approximately,
with a relative error parameterized by quantity $\lam$. We prove upper and lower bound 
on the tracking performance as a function of this localization parameter $\lam$. 
A few surprising consequences of our results are (1) that the naive strategy of 
``always move to the observed location'' is asymptotically optimal, (2) the 
tracking error has a nice analytic form, showing an inverse quadratic dependence 
on $\lam$, and (3) under the natural ``path proportional error''for environments
with obstacles, the trackability has qualitatively a different dependence 
of the form $\Omega (1/\lam)$.

Compared to often-used heuristics such as Kalman filters, our work offers a 
more theoretical perspective for analyzing motion and localization errors in the 
presence of inevitable noise, which may be especially useful in situations 
where worst-case bounds are important, such as adversarial
tracking or surveillance. In addition to improving the constant factors
in our bounds, it will also be interesting to study the noisy sensing
model for other more complex settings.

\newpage

\bibliographystyle{plain}
\bibliography{main}

\begin{thebibliography}{10}

\bibitem{aigner}
M.~Aigner and M.~Fromme.
\newblock A game of cops and robbers.
\newblock {\em Discrete Applied Mathematics}, 8(1):1 -- 12, 1984.

\bibitem{alpern-princess}
S.~Alpern, R.~Fokkink, R.~Lindelauf, and G.-J. Olsder.
\newblock The "princess and monster" game on an interval.
\newblock {\em SIAM J. on Control and Optimization}, 47(3):1178--1190, 2008.

\bibitem{Bhadauria2012}
D.~Bhadauria, K.~Klein, V.~Isler, and S.~Suri.
\newblock Capturing an evader in polygonal environments with obstacles: The
  full visibility case.
\newblock {\em International Journal of Robotics Research}, 31(10):1176--1189,
  September 2012.

\bibitem{bopardikar-chauffeur}
S.~D. Bopardikar, F.~Bullo, and J.~P. Hespanha.
\newblock A cooperative homicidal chauffeur game.
\newblock In {\em 46th IEEE Conference on Decision and Control}, pages 4857
  --4862, 2007.

\bibitem{Bopardikar2008}
S.~D. Bopardikar, F.~Bullo, and J.~P. Hespanha.
\newblock On discrete-time pursuit-evasion games with sensing limitations.
\newblock {\em IEEE Transactions on Robotics}, 24(6):1429--1439, 2008.

\bibitem{gerkey-fov}
B.~P. Gerkey, S.~Thrun, and G.~J. Geoffrey.
\newblock Visibility-based pursuit-evasion with limited field of view.
\newblock {\em International Journal of Robotics Research}, 25(4):299--315,
  2006.

\bibitem{Guibas99}
L.~J. Guibas, J.-C. Latombe, S.~M. LaValle, D.~Lin, and R.~Motwani.
\newblock Visibility-based pursuit-evasion in a polygonal environment.
\newblock {\em International Journal of Computational Geometry and
  Applications}, 9(5):471--494, 1999.

\bibitem{isler-cop-robber}
V.~Isler and N.~Karnad.
\newblock The role of information in the cop-robber game.
\newblock {\em TCS}, 399(3):179 -- 190, 2008.

\bibitem{kalman1}
R.~E. Kalman.
\newblock A new approach to linear filtering and prediction problems.
\newblock {\em Transactions of the ASME--Journal of Basic Engineering},
  82:35--45, 1960.

\bibitem{karnad08}
N.~Karnad and V.~Isler.
\newblock Bearing-only pursuit.
\newblock In {\em Proc. IEEE Int. Conf. on Robotics and Automation}, 2008.

\bibitem{KleinSocg13}
K.~Klein and S.~Suri.
\newblock Capture bounds for visibility-based pursuit evasion.
\newblock In {\em Proc. of the 29th Symposium on Computational Geometry}, SoCG
  '13, pages 329--338, New York, NY, USA, 2013. ACM.

\bibitem{kleinIsaac13}
K.~Klein and S.~Suri.
\newblock Pursuit evasion on polyhedral surfaces.
\newblock In {\em Proc. of 24th International Conference on Algorithms and
  Computation (ISAAC)}, 2013.

\bibitem{Kuntsevich2002}
V.~M. Kuntsevich and A.~V. Kuntsevich.
\newblock Analysis of the pursuit-evasion process for moving plants under
  uncertain observation errors dependent on states.
\newblock In {\em Proc. of the 15th International Federation of Automatic
  Control}, 2002.

\bibitem{Liao03}
L.~Liao, D.~Fox, J.~Hightower, H.~Kautz, and D.~Schulz.
\newblock Voronoi tracking: Location estimation using sparse and noisy sensor
  data.
\newblock In {\em Proc. of International Conference on Intelligent Robots and
  Systems (IROS}, 2003.

\bibitem{Noori2013}
N.~Noori and V.~Isler.
\newblock Lion and man with visibility in monotone polygons.
\newblock {\em International Journal of Robotics Research}.

\bibitem{park}
S.-M. Park, J.-H. Lee, and K.-Y. Chwa.
\newblock Visibility-based pursuit-evasion in a polygonal region by a searcher.
\newblock In {\em Proc. of ICALP}, pages 281--290. Springer-Verlag, 2001.

\bibitem{Rote2003}
G.~Rote.
\newblock Pursuit-evasion with imprecise target location.
\newblock In {\em Proc. of 14th ACM-SIAM Symposium on Discrete algorithms},
  SODA '03, pages 747--753, Philadelphia, PA, USA, 2003. Society for Industrial
  and Applied Mathematics.

\bibitem{Sheng2005}
X.~Sheng, Y.-H. Hu, and P.~Ramanathan.
\newblock Distributed particle filter with {GMM} approximation for multiple
  targets localization and tracking in wireless sensor network.
\newblock In {\em Proc. of the 4th International Symposium on Information
  Processing in Sensor Networks}, 2005.

\bibitem{suzuki-visibility}
I.~Suzuki and M.~Yamashita.
\newblock Searching for a mobile intruder in a polygonal region.
\newblock {\em SIAM Journal on Computing}, 21:863--888, October 1992.

\bibitem{thrun1}
S.~Thrun, D.~Fox, W.~Burgard, and F.~Dellaert.
\newblock Robust monte carlo localization for mobile robots.
\newblock {\em Artif. Intell.}, 128(1-2):99--141, 2001.

\end{thebibliography}

\end{document}